\newcommand{\ie}{\textit{i.e.}}
\newcommand{\eg}{\textit{e.g.}}
\newtheorem{theorem}{Theorem}
\begin{document}

\title{Building Real-time Awareness of Out-of-distribution in Trajectory Prediction for Autonomous Vehicles}

\author{Tongfei Guo} 
\email{guo.t@northeastern.edu}
\orcid{0009-0006-7949-5238}
\affiliation{%
  \institution{Northeastern University}
  %Electrical and Computer Engineering Department, 
  \city{Boston}
  \state{Massachusetts}
  \country{USA}
}

\author{Taposh Banerjee}
\affiliation{%
  \institution{University of Pittsburgh}
  \city{Pittsburgh}
  \country{USA}}
\email{taposh.banerjee@pitt.edu}

\author{Rui Liu}
\affiliation{%
  \institution{Kent State University}
  \city{Kent}
  \country{USA}
}
\email{rliu11@kent.edu}

\author{Lili Su}
\affiliation{%
  \institution{Northeastern University}
  %Electrical and Computer Engineering Department, 
  \city{Boston}
  \state{Massachusetts}
  \country{USA}
}
\email{l.su@northeastern.edu}

\renewcommand{\shortauthors}{Guo et al.}

\begin{teaserfigure}
    \centering
    \captionsetup{labelformat=empty}\includegraphics[width=0.85\textwidth]{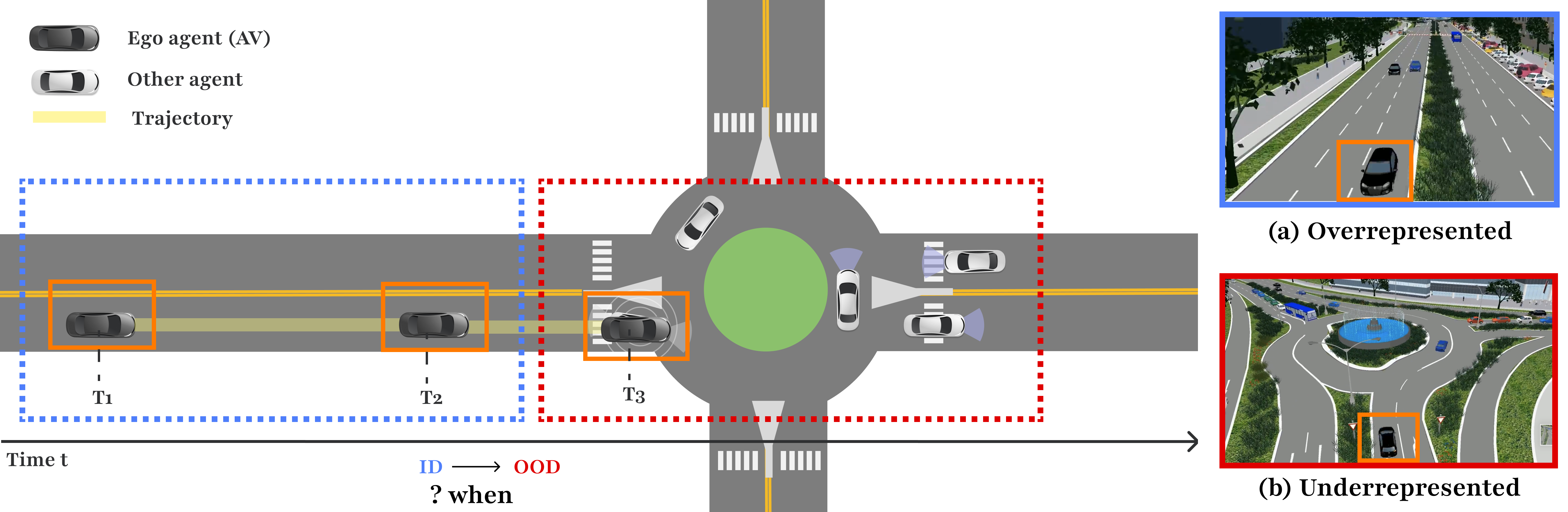}
    \caption{  Fig. 1.  An AV enters from overrepresented straight lanes (ID scene) to an underrepresented complex intersection (OOD scene) at time $t$. As the AV is not familiar with the intersection, it may make wrong trajectory predictions on neighboring vehicles, resulting in collisions. If we can infer the time $t$ that AV enters OOD scenes, we can pass back the controls to human drivers in time. }
    \Description{}
    \label{fig:fig1}
\end{teaserfigure}

\begin{abstract} 

Accurate trajectory prediction is essential for the safe operation of autonomous vehicles in real-world environments. Even well-trained machine learning models may produce unreliable predictions due to discrepancies between training data and real-world conditions encountered during inference. 
In particular, the training dataset %which 
tends to overrepresent common scenes (\eg, straight lanes) while underrepresenting less frequent ones (\eg, traffic circles). In addition, it often overlooks 
unpredictable real-world events such as sudden braking or falling objects. To ensure safety, it is critical to detect in real-time when a model’s predictions become unreliable. %, enabling control to be transferred back to human drivers. 
Leveraging the intuition that in-distribution (ID) scenes exhibit error patterns similar to training data, while out-of-distribution (OOD) scenes do not, we introduce a principled, real-time approach for OOD detection by framing it as a change-point detection problem. 
We address the challenging settings where the OOD scenes are deceptive, meaning that they are not easily detectable by human intuitions.   
Our lightweight solutions can handle the occurrence of OOD at any time during trajectory prediction inference.  
Experimental results on multiple real-world datasets using a benchmark trajectory prediction model demonstrate the effectiveness of our methods. 
\end{abstract}

\ccsdesc[500]{Computing methodologies ~ Sequential decision making }
\ccsdesc[500]{Mathematics of computing ~ Probability and statistics}

\keywords{autonomous vehicles, trajectory prediction, machine learning, out-of-distribution detection}

\maketitle
\markboth{Tongfei Guo et al.}{Building Real-time Awareness of Out-of-distribution in Trajectory Prediction for Autonomous Vehicles}

\section{INTRODUCTION}
\label{Introduction}

AI technologies are the backbone of modern autonomous vehicles (AVs), and are reshaping transportation systems. 
Accurate trajectory prediction is essential for the safe operation of autonomous vehicles in real-world environments. 
However, even well-trained machine learning (ML) models may produce unreliable predictions due to discrepancies between training data and real-world conditions encountered during inference.  
Specifically, many public datasets overrepresent certain driving scenes (\eg\,straight lanes and high-ways) while significantly underrepresenting others (\eg\,complex traffic circles or intersections), as illustrated in~Fig. \ref{fig:fig1}. Moreover, real-world environments suffer a wide range of uncertainties such as 
sudden braking by nearby vehicles, large objects falling from other vehicles, or other deceptive yet life-threatening distribution shifts~\cite{ filos2020can, tang2020naturalistic,zhang2022adversarial}. 
As a consequence, an AV may unexpectedly run into driving scenarios that are poorly represented by the training data, which we refer to as {\em out-of-distribution (OOD) data}. 
To ensure safety, it is critical to detect in real-time when a ML model’s trajectory predictions become unreliable, allowing control to be seamlessly transferred back to human drivers.  

\begin{figure}
\centering
\captionsetup{labelformat=empty} 
\includegraphics[width=0.7\linewidth]{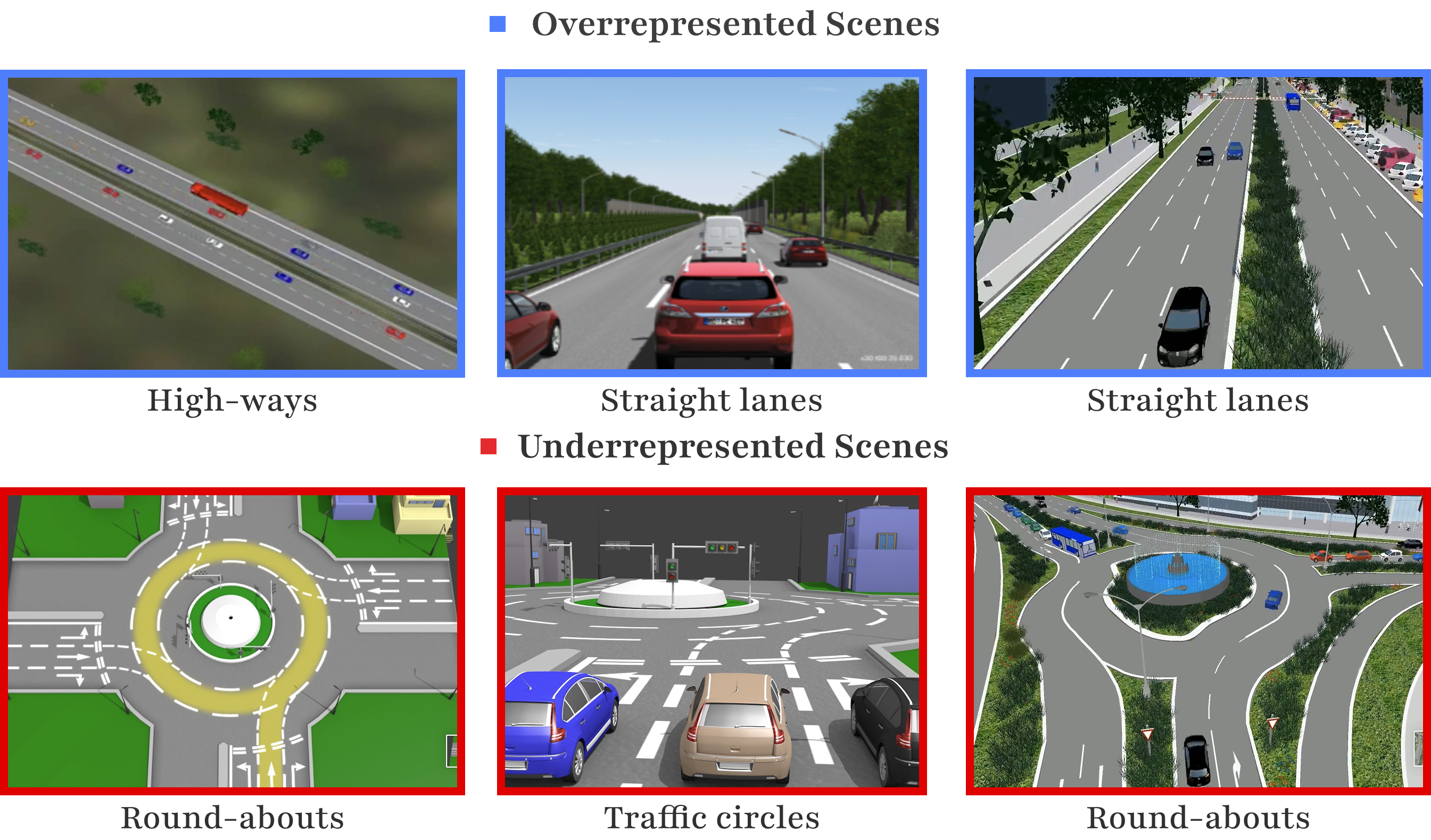}
\caption{  {Fig. 2. } Illustration of overrepresented and underrepresented scenes for the deployed ML models. Images from PTV Vissim simulation.}
\Description{}
\label{fig:fig2}
\end{figure}

Prior works focus on improving trajectory prediction reliability through anomaly detection~\cite{8684317} and uncertainty estimation~\cite{9815528}. However, these methods typically rely on the assumption that the training data fully represent real-world scenarios. Anomaly detection methods, such as RNN-based models, require fully representative training data to learn sequential dependencies and detect unusual trajectory patterns, such as sudden direction changes or irregular speeds~\cite{song2018anomalous}. Similarly, uncertainty estimation methods like variational autoencoders (VAEs) rely on comprehensive training data to model reliable distributions and flag anomalies via reconstruction errors~\cite{8794282}. Insufficient data coverage hinders their ability to quantify uncertainty in novel situations, degrading performance.

Some OOD scenes are easy to detect, such as sudden acceleration or rapid lane changes across multiple lanes. In this paper, we focus on a more challenging and practically relevant setting where OOD scenes are \textbf{deceptive}—meaning they are not easily identifiable by human intuition. Specifically, we examine subtle changes in the driving behavior of neighboring vehicles that maintain motion patterns within expected variations, making them difficult to distinguish from normal behavior. Building on the intuition that in-distribution (ID) driving scenes exhibit prediction error distributions similar to those observed during training, while OOD scenes deviate from this pattern, we closely monitor the sequence of prediction errors. We frame the trajectory prediction OOD awareness problem as a quickest change-point detection (QCD) problem.
Drawing on well-established techniques of sequential analysis, we develop principled and lightweight OOD detection methods that offer formally assured performance in the trade-off between detection delay and false alarm.

\noindent {\bf Contributions.} We are the first to apply QCD methods to detect OOD on multiple real-world trajectory prediction datasets. Our solutions are lightweight (monitoring a scalar variable of the prediction errors only) and can handle the occurrence of OOD at any time during inference. We summarize our main contributions as follows: 

\begin{itemize}
    \item We observe that pre- and post-change trajectory prediction errors are well-modeled as Gaussian Mixture Models, forming the basis of our OOD detection framework.
    \item We adapt the CUSUM algorithm for OOD detection, achieving 95\% faster detection with minimal false alarms compared to classical sequential methods.
    \item We test robustness across varying levels of distribution knowledge (complete, partial, unknown), showing our method remains the most effective in challenging scenarios.
\end{itemize}

\section{RELATED WORK}
\label{sec: related work} 
\subsection{Trajectory Prediction}

Safe autonomous driving in dynamic environments requires predicting future states of nearby traffic agents, particularly surrounding vehicles. Trajectory prediction methods for AVs are categorized into four types: physics-based, classic machine learning, deep learning, and reinforcement learning approaches.
Physics-based methods use kinematic~\cite{anderson2021kinematic, lytrivis2008cooperative} or dynamic~\cite{lin2000vehicle, pepy2006reducing, kaempchen2009situation} models. These approaches are simple and computationally efficient but struggle with long-term predictions in static environments, as they ignore complex interactions and environmental factors.
Classic machine learning methods, such as Gaussian Processes~\cite{wang2007gaussian}, Hidden Markov Models~\cite{deo2018would}, Dynamic Bayesian Networks~\cite{li2020pedestrian}, and Gaussian Mixture Models~\cite{mebrahtu2023transformer}, improve prediction accuracy and extend time horizons. However, their reliance on predefined maneuvers limits adaptability to diverse driving scenarios.
Deep learning architectures—including RNNs~\cite{jain2016structural}, CNNs~\cite{cui2019multimodal, phan2020covernet}, and GNNs~\cite{lee2017desire}—represent the state-of-the-art. These models excel in complex environments and long-term predictions by integrating interaction patterns and map data. Yet, they demand large training datasets and face trade-offs between computational efficiency and real-time performance.
Reinforcement learning methods, such as inverse reinforcement learning~\cite{gonzalez2016high, shen2018transferable} and imitation learning~\cite{qi2020imitative, choi2021trajgail}, mimic human decision-making to generate accurate long-horizon trajectories. However, they suffer from high computational costs and lengthy training processes.
Existing literature often assumes training data comprehensively covers all driving scenarios, a premise rarely valid in practice \cite{filos2020can}. Consequently, while current models perform well in ID scenes, their reliability degrades significantly in OOD scenarios. This limitation underscores the critical need for robust trajectory prediction to ensure AV safety in unseen dynamic environments.

\subsection{Out-of-Distribution Detection} %Out-of-distribution (OOD) data refers to scenes that significantly deviate from the distribution of the training data, presenting substantial challenges for AVs trajectory prediction models \cite{liu2021towards}. 

OOD detection is critical for ensuring the reliability of ML models in open-world settings, where they frequently encounter scenarios not represented in the training data.
OOD data can emerge from a variety of sources, including environmental variability, unusual interactions, and intentional or rare events \cite{li2022coda}. For instance, environmental factors such as unexpected weather conditions (\eg, heavy rain, fog, or snow), abrupt lighting changes (\eg, transitions from daylight to nighttime), or unfamiliar road types (\eg, unpaved or construction zones) can introduce OOD scenes that were not accounted for during model training. Additionally, unpredictable behaviors from other road users—such as sudden stops, erratic lane changes, or jaywalking pedestrians—can further exacerbate the occurrence of OOD data. When trajectory prediction models encounter OOD data, their performance often degrades, leading to unreliable or unsafe predictions that can compromise the safety and efficacy of AV systems \cite{fang2022out, li2023rethinking}. 
Traditional OOD detection methods often rely on scoring functions that estimate the likelihood of a sample belonging to the training distribution, either through output-based approaches like maximum softmax probability~\cite{hendrycks2016baseline} and energy scores~\cite{liu2020energy}, or representation-based methods that perform density estimation in lower-dimensional latent spaces, often modeled as Gaussian~\cite{lee2018simple,ren2021simple} or using non-parametric techniques~\cite{sun2022out}. 
Despite their utility, these methods face challenges in high-dimensional domains like trajectory prediction. Density estimation becomes computationally intensive and struggles with real-time scalability, a critical requirement for low-latency AV systems. Additionally, no single approach consistently outperforms others across benchmarks \cite{tajwar2021no}, highlighting generalization limitations.
Motivated by these computational and performance constraints, we propose exploring efficient, domain-specific techniques—particularly sequential analysis—to enable real-time OOD detection in dynamic environments.

\subsection{Change-point Detection}
Change-point detection identifies abrupt shifts in time series data and has been extensively studied in statistics, signal processing, and machine learning, with applications in finance, bioinformatics, climatology, and network monitoring~\cite{  yang2006adaptive, malladi2013online, reeves2007review, itoh2010change,tartakovsky2012efficient}. The objective was to detect a shift in the mean of independent and identically distributed (iid) Gaussian variables for the purpose of industrial quality control.
A prominent branch of change point detection is quickest change detection (QCD), which focuses on detecting distributional shifts as rapidly as possible while controlling false alarms \cite{veeravalli2014quickest}. QCD is typically formulated under Bayesian and non-Bayesian frameworks. In the Bayesian setting, where the change point is treated as a random variable with a known prior, the Shiryaev test provides an optimal solution by declaring a change when the posterior probability exceeds a threshold \cite{shiryaev1963optimum, tartakovsky2005general}. Recent findings on QCD within the Bayesian framework are available in~\cite{banerjee2015data, tartakovsky2014sequential, guo2023bayesian, hou2024robust}.
In the non-Bayesian framework, the change point is unknown, and the objective is to minimize the worst-case detection delay, leading to the minimax-optimal Cumulative Sum (CUSUM) algorithm \cite{page1954continuous, lorden1971procedures, moustakides1986optimal}. CUSUM sequentially accumulates deviations from an expected baseline and triggers an alarm when a predefined threshold is crossed, making it first-order asymptotically optimal \cite{moustakides1986optimal, veeravalli2014quickest}. We believe that CUSUM is naturally well-suited for OOD detection due to its ability to identify deviations in real-time prediction errors. Its effectiveness has led to its widespread adoption in time-sensitive domains, as evidenced by prior research \cite{chowdhury2012bayesian, tartakovsky2006detection, nikiforov1995generalized}.

\section{PROBLEM FORMULATION}

\begin{table}[ht]
    \centering
    \renewcommand{\arraystretch}{1.2} % Adjust row height
    \label{tab: tab1}
    \caption{Table 1. Main notations used throughout the paper.}
    \begin{tabular}{c|p{12cm}}
        \hline
        \text{Notation} & \text{Description} \\
        \hline
        $\mathcal{D}$ & Dataset \\
        $N$ & Number of driving scenes \\ 
        $S_j$ & A single driving scene described by the triple $S_j = (\mathcal{X}_j, \mathcal{Y}_j, \mathcal{M})$ \\ 
        $\mathcal{X}_j$ & Collection of observed trajectories in scene $S_j$, $\mathcal{X}_j = \{x_{j}^1, \cdots, x_{j}^{m_{j}}\}$ \\
        $\mathcal{Y}_j$ & Collection of future trajectories to be predicted based on $\mathcal{X}_j$, $\mathcal{Y}_j = \{y_{j}^1, \cdots, y_{j}^{m_{j}}\}$ \\
        $\mathcal{M}$ & Map information; if unavailable, $\mathcal{M} = \emptyset$ \\
        $m_j$ & Number of agents (vehicles, pedestrians, etc.) in scene $S_j$ \\
        \hline
        $L_I$ & Number of time frames in the history trajectory \\
        $L_O$ & Number of time frames in the future trajectory \\
        %$L_P$ & Number of time frames which are perturbed for crafted trajectory. \\
        \hline
        $\ell(\cdot)$ & Loss function \\
        $\lambda$ & Regularization coefficient \\ 
        \hline
        $\hat{h}$ & A given deployed ML model\\
        \hline
        $\epsilon_t$ & Observed prediction error at time $t$ \\
        $g_{\theta}(\epsilon_t)$ & Post-change distribution parameterized by $\theta$ \\
        $f_{\phi}(\epsilon_t)$ & Pre-change distribution parameterized by $\phi$  \\
        $\gamma$ & Change-point at which the distribution shifts from $f_{\phi}$ to $g_{\theta}$. \\        
        $\tau$ & Stopping time at which a change is detected \\
        $b$ & Detection threshold that controls sensitivity to change detection \\
        $W_t$ & CUSUM statistic at time $t$\\
        \hline
        $\eta$ & True gap between the post-change and pre-change distributions \\
        $\kappa$ & Assumed shift value used to evaluate robustness \\
        $\mathcal{P}_1$ & 
        The set of all possible $\hat{g}_{\theta}$ obtained by shifting the $f_{\phi}(\epsilon_t)$, $\mathcal{P}_1 = \{f_{\phi}(\epsilon_t - \eta) \mid \eta \geq \kappa\}$ \\
        $\mathcal{P}_2$ & 
        The convex hull of $\mathcal{P}_1$,  $\mathcal{P}_2 = \operatorname{conv}(\mathcal{P}_1)$ \\ 
        \hline
    \end{tabular}
    \label{tab:notations}
\end{table}

\subsection{Trajectory Representation and the Prediction Problem}  
\label{sec: sec3.1}
\noindent The training dataset $\mathcal{D} = \{S_j\}_{j=1}^N$ is a collection of driving scenes. 
Each $S_{j}$ described by a triple $S_{j} = (\mathcal{X}_j, \mathcal{Y}_j, \mathcal{M})$, where $\mathcal{X}_j$ is the collections of observed trajectories,  $\mathcal{Y}_j$ is the collection of future trajectories that we aim to predict based on $\mathcal{X}_j$, 
and $\mathcal{M}$ is the map information. When a dataset does not have map information, we set $\mathcal{M} = \emptyset$. 
In a driving scene, an agent represents a moving object such as a vehicle, a motorcycle, or a pedestrian.  
For a given driving scene $S_{j}$, let $m_{j}$ denote the number of agents in the scene. 
Hence, $\mathcal{X}_j$ and $\mathcal{Y}_j$ can be explicitly written out as 
$\mathcal{X}_j = \{x_{j}^1, \cdots, x_{j}^{m_{j}}\}$ and 
$\mathcal{Y}_j = \{y_{j}^1, \cdots, y_{j}^{m_{j}}\}$, 
where $x_{j}^i \in \mathbb{R}^{2\times L_I}$ and $y_{j}^i \in \mathbb{R}^{2\times L_O}$ 
for each agent $i$ in the scene. 
Here, $L_I$ and $L_O$ are the numbers of time frames in the observed trajectory $x_{j}^i$ and in the future trajectory $y_{j}^i$, respectively. It is worth noting that: \\  
(1) Different from traditional supervised learning problem, in the trajectory prediction problem, the trajectory of an agent is a sequence of positions, resulting in possibly often overlapping  $\mathcal{X}_j$ and $\mathcal{Y}_j$. \\ 
(2) Different driving scenes may contain different numbers of agents as can be seen in Fig.\,\ref{fig:fig1}. 

A machine-learning model $\hat{h}$ for the trajectory prediction task can be trained by minimizing the objective function that may take the form  
\[
\min_f \quad 
\frac{1}{N} \sum_{j=1}^N \ell\left(\hat{h}(\mathcal{X}_j), \mathcal{Y}_j \right) + \lambda \times \text{regularization}, 
\]
where $\ell$ is a loss function such as the negative log-likelihood (NLL) of the Laplace/Gaussian distributions \cite{peng2023privacy,tang2024hpnet}, $\lambda\ge 0$ is the regularization coefficient, and one popular choice of the regularization term in the objective is the cross entropy loss.

The specific training methods are out of the scope of this paper. Readers can find concrete instances of the objective functions in \cite{peng2023privacy,tang2024hpnet} and others.  
Notably, different from traditional machine learning tasks, where the model trained has fixed input and output dimensions, the trajectory prediction model \cite{tang2024hpnet} $\hat{h}$ may have varying input and output dimensions.

\vskip \baselineskip 

\noindent {\bf Evaluation Metrics.}
Let $\hat{h}$ be a given deployed trajectory prediction model. 
We follow existing literature on trajectory prediction model training to evaluate the generalization performance of $\hat{h}$ \cite{wang2019exploring, dendorfer2021mg}.
Let $\mathcal{D}_{\text{test}} = \{\mathcal{S}_j\}_{j=1}^{N_{\text{test}}}$ be the test dataset. 
Instead of evaluating the trajectory prediction errors of all neighboring agents, only the prediction errors on an arbitrarily chosen agent will be evaluated. Such chosen agent is referred to as {\em target agent} of scene $\mathcal{S}_j$, denoted as $t_j$. 
We use three common metrics in trajectory prediction: {\em Average displacement error} (ADE) and {\em Final displacement error} (FDE) and {\em Root Mean Squared Error} (RMSE). These metrics are widely adopted in prior works \cite{kim2020multi, deo2018convolutional}, including the ICRA 2020 \textit{nuScenes} prediction challenge \cite{ivanovic2022injecting, liu2024laformer}.  

- \textbf{ADE} measures the average error over the entire prediction horizon:
  \[
  \text{ADE} ~ := ~ \frac{1}{N_{\text{test}}}\sum_{j=1}^{N_{\text{test}}} \frac{1}{L_O} \| \hat{h}(x_{j}^{t_j}) -  y_{j}^{t_j}\|_2,
  \]
  
- \textbf{FDE} captures the error at the final time step of the predicted trajectory:
  \[
  \text{FDE} ~ := ~ \frac{1}{N_{\text{test}}}\sum_{j=1}^{N_{\text{test}}} \frac{1}{L_O} \| [\hat{h}(x_{j}^{t_j})]_{L_O} -  [y_{j}^{t_j}]_{L_O}\|_2, 
  \]
where $[\hat{h}(x_{j}^{t_j})]_{L_P}$ and $[y_{j}^{t_j}]_{L_P}$ are the last positions in $\hat{h}(x_{j}^{t_j})$ and $y_{j}^{t_j}$, respectively.

- \textbf{RMSE} computes the square root of the average squared differences between predicted and true positions over the prediction window:
  \[
  \text{RMSE}_t ~ := ~ \frac{1}{N_{\text{test}}}\sum_{j=1}^{N_{\text{test}}} \sqrt{\frac{1}{L_O} \| \hat{h}(x_{j}^{t_j}) -  y_{j}^{t_j}\|^2_2}.  
  \]

\subsection{Casting OOD Awareness as a QCD Problem}
\label{sec: sequential_analysis}

In the context of trajectory prediction, we define OOD awareness as the ability to detect deviations in prediction performance over time. Specifically, we consider a sequence of prediction errors $\epsilon_t$ arising from a %neural network 
trajectory prediction model $\hat{h}$. The sequence of errors is defined as
\begin{equation}
    \epsilon_t = d(\hat{h}(\mathcal{X}_t), \mathcal{Y}_t),
\end{equation}
where $d(\cdot, \cdot)$ is a distance function measuring the discrepancy between the predicted future trajectory and the ground truth. As mentioned in  Section \ref{sec: sec3.1}, we consider three distance functions, \ie, ADE, FDE, and RMSE. 

The AV observes the sequence $\{\epsilon_t\} = \{\epsilon_1, \epsilon_2, \dots, \epsilon_t\}$ of the target agent ($a$) for every scene. %and monitors its statistical properties over time. 
Let the data distribution change at an unknown time $\gamma$.
We denote the pre-change distribution of $\epsilon_t$ by $f_{\phi}(\epsilon_t)$ for $t < \gamma$ and the post-change distribution by $g_{\theta}(\epsilon_t)$ for $t \geq \gamma$, \ie, 
\begin{equation}
    \epsilon_t \sim \begin{cases}
        f_\phi(\epsilon_t), & t < \gamma \text{ (ID Scene)}, \\
        g_\theta(\epsilon_t), & t \geq \gamma \text{ (OOD Scene)}.
    \end{cases}
\end{equation}
Let $\mathcal{A}$ be any algorithm. Let $\tau_{\mathcal{A}}$ be the time that a change is declared under algorithm $\mathcal{A}$. When the adopted $\mathcal{A}$ is clear from the context, we drop the subscript $\mathcal{A}$ in $\tau$. Two key metrics for evaluating OOD detection algorithms are: (1) \emph{Detection Delay}—time to detect an OOD event, reflecting responsiveness; and (2) \emph{False Alarm Rate}—frequency of incorrect OOD flags, ensuring reliability. These metrics are widely used in \cite{lai2008quickest, lau2018binning, liang2024quickest} due to their ability to capture the trade-off between sensitivity and robustness.

For any $\gamma < \infty$, a true detection happens if $\tau \geq \gamma$ and false if $\tau < \gamma$. The design of the quickest change detection procedures involves optimizing the tradeoff between the delay to detection $(\tau - \gamma + 1)^+$ and false alarm. 

To ensure robust safety guarantees, we follow Lorden's minimax formulation \cite{lorden1971procedures}, which quantifies the detection delay using the Worst-Case Average Detection Delay (WADD), 
\begin{equation}
%\label{eq:WADD}
    \text{WADD}(\tau) = \sup_{t \geq 1} \text{ess} \sup \, \mathbb{E}_t[(\tau - \gamma + 1 )^+ \mid \epsilon_1, \dots, \epsilon_{t-1}],
\label{eq: delay}
\end{equation} 
where \((\cdot)^+ = \max\{0, \cdot\}\), \(\mathbb{E}_t[\cdot]\) denotes the expectation when the change occurs at time \(t\), and \(\text{ess} \sup(\cdot)\) refers to the essential supremum of a scalar random variable.

The metric of false alarm can be measured by the false alarm rate $\text{FAR}(\tau)$ \cite{veeravalli2014quickest} or by the mean time to false alarm MTFA$(\tau)$, \ie,  
\begin{equation}
%\label{eq:FAR}
    \text{FAR}(\tau) = \frac{1}{\mathbb{E}_\infty(\tau)} \quad  \text{and}  \quad
    % \text{MTFA}(\tau) = \mathbf{E}_\gamma[\tau \Pi_{\tau < \gamma}] = \mathbf{E}_{\infty}[\tau \Pi_{\tau < \gamma}]
    \text{MTFA}(\tau) = \mathbf{E}_{\infty}[\tau \Pi_{\tau < \gamma}].
\label{eq: far}
\end{equation}
where \(\mathbb{E}_\infty\) denotes the expectation %under the assumption that
when the change never occurs.

\section{PRELIMINARIES}
\label{sec: preliminaries}
This section provides the preliminaries for our OOD awareness algorithms.
In Section \ref{subsec:uncertainty-generation}, we outline the high-level approach for generating deceptive OOD scenes in our experiments.
In Section \ref{subsec: Gaussian mixture experiment}, we present preliminary experimental observations on the statistical structure of the pre- and post-change distributions.

\subsection{Deceptive OOD Scene Generation}  
\label{subsec:uncertainty-generation}

Deceptive OOD scenes refer to driving scenarios where deviations from expected behavior are subtle and difficult to identify, even for human observers. Unlike easily detectable OOD events such as sudden acceleration or erratic lane changes, deceptive OOD scenes maintain motion patterns that fall within normal variations, making them particularly challenging to recognize. These scenarios can arise from minor trajectory shifts of neighboring vehicles due to environmental factors such as road debris, unexpected pedestrian movements, or subtle driver behaviors. %The key challenge is that these deviations do not immediately appear anomalous, yet they can significantly affect the trajectory prediction performance of AVs, leading to unsafe decisions.

To systematically generate deceptive OOD scenes, we adopt the adversarial perturbation framework proposed by \cite{zhang2022adversarial}. 
Fig. \ref{fig: fig3} illustrates the process of introducing controlled modifications to historical trajectory data while ensuring the altered trajectories remain physically plausible. In particular, we use the single-frame perturbation method in \cite{zhang2022adversarial}, \ie, perturbing only $L_P=1$ time-frame among the $L_I$ history trajectory time-frames.  
% {\color{red}The perturbation affects only a single-frame prediction, altering the observed trajectory data without drastically changing motion trends.}\ls{Unclear. This is a place to leave an explanation on $L_P$. } Specifically, the perturbations are constrained by realistic bounds on velocity and acceleration to preserve natural driving characteristics. The magnitude of perturbation is regulated by a scaling coefficient $\theta$ (with $0 \leq \theta \leq 1$), ensuring the generated trajectories adhere to feasible driving constraints. By applying these perturbations in a single-frame prediction setting, we create deceptive OOD scenarios that remain indistinguishable from normal driving behavior yet subtly influence trajectory prediction models. The impact of these perturbations on prediction error is analyzed, with results presented in Section \ref{Trajectory_Prediction_Models}. 
% 
% 
Fig. \ref{fig:fig4} provides a comparative visualization of AV trajectory prediction performance in normal (ID) and perturbed (OOD) scene, highlighting why deceptive OOD cases pose a greater hazard than one might expect. In the ID setting (left), the ego vehicle (gray car) accurately predicts the trajectories of surrounding vehicles, maintaining safe driving behavior. However, in the OOD setting (right), the target vehicle undergoes a subtle deviation due to an external factor, such as an obstacle ahead. Although the deviation is minor, it causes the ego vehicle's ML model to mispredict the target's future path, leading to an unnecessary sudden brake maneuver. This reaction could result in a rear-end collision. 
\begin{figure}
\centering
\captionsetup{labelformat=empty} 
\includegraphics[width=0.5\textwidth]{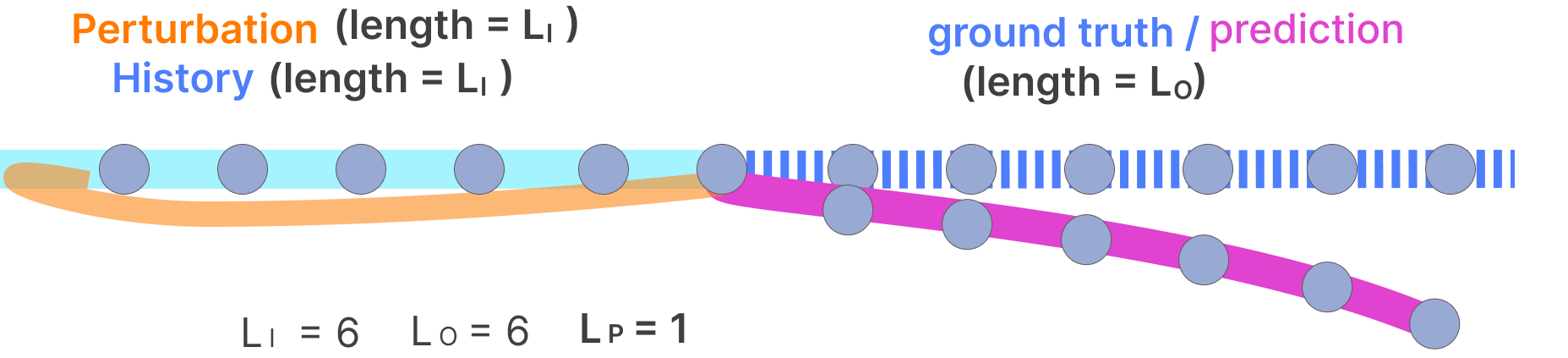}
\caption{  {Fig. 3.} Visualization of minor perturbation applied to the history trajectory of the target vehicle. The input and output trajectories consist of \( L_I = 6 \) and \( L_O = 6 \) time frames, respectively, while \( L_P = 1 \) denotes the perturbed time frame within the input. Dots represent observed and predicted trajectory positions at different time frames.  }
\Description{}
\vskip -.8ex
\label{fig: fig3}
\vskip -1.2ex
\end{figure}

\begin{figure}
  \centering
  \captionsetup{labelformat=empty} 
  \includegraphics[width=0.8\textwidth]{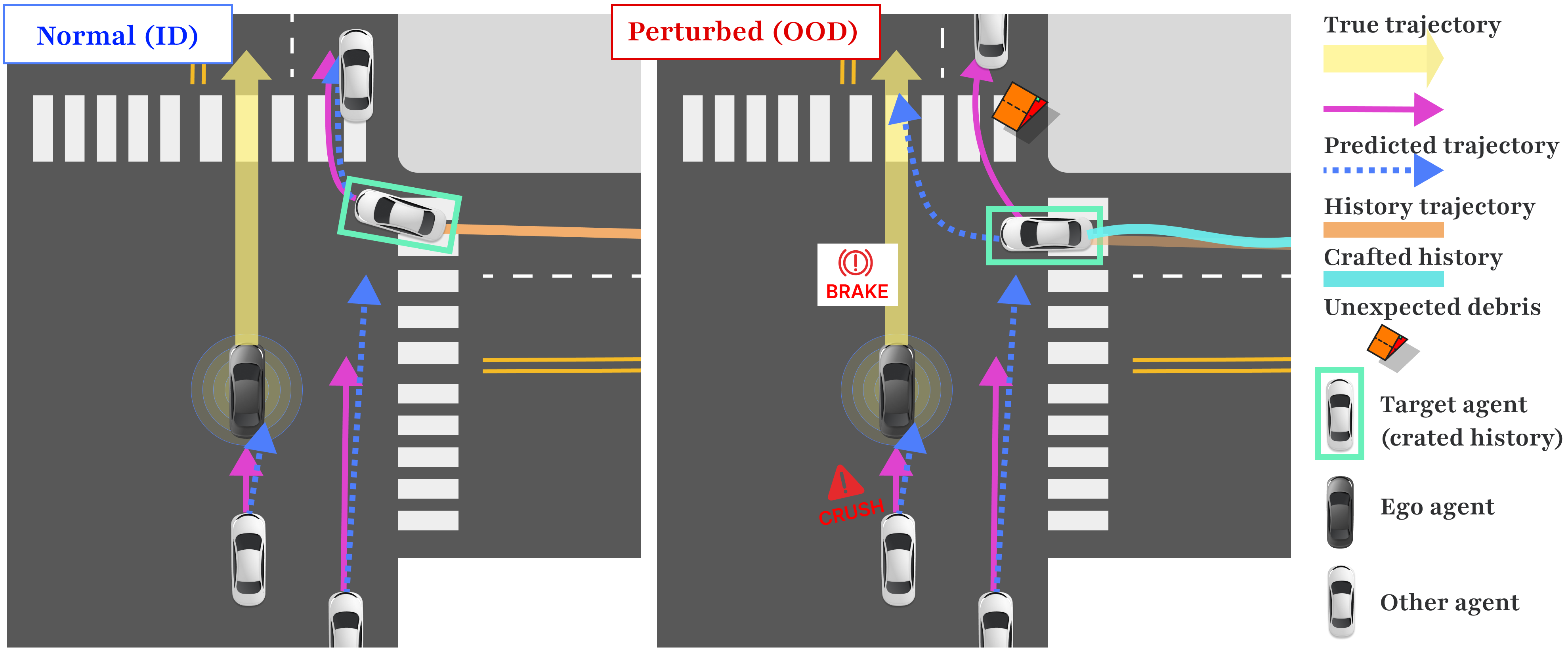}
  \caption{  {Fig. 4.} Illustration of the performance comparison of a given ML trajectory prediction model on ID and OOD scenes. In an ID scenario (left figure), the ego vehicle (gray car), on which the ML model is implemented, can accurately predict the trajectories of neighboring vehicles. In an OOD scenario (right figure), the target vehicle slightly deviates from its usual driving path due to unexpected debris from the vehicle ahead. As a result, the ego vehicle may mispredict the target vehicle's trajectory, mistakenly assuming it will move into the same lane. In response, the ego vehicle may suddenly brake, potentially causing a rear-end collision.}
  \Description{}
  \label{fig:fig4}
\end{figure}

\subsection{Gaussian Mixture of Pre- and Post-Change Distributions}
\label{subsec: Gaussian mixture experiment}
\begin{figure}[!t] 
%\vspace{-0.3cm}
\captionsetup{labelformat=empty}
    \centering
    \subfloat[GRIP++ ]{\includegraphics[width=.35\linewidth]{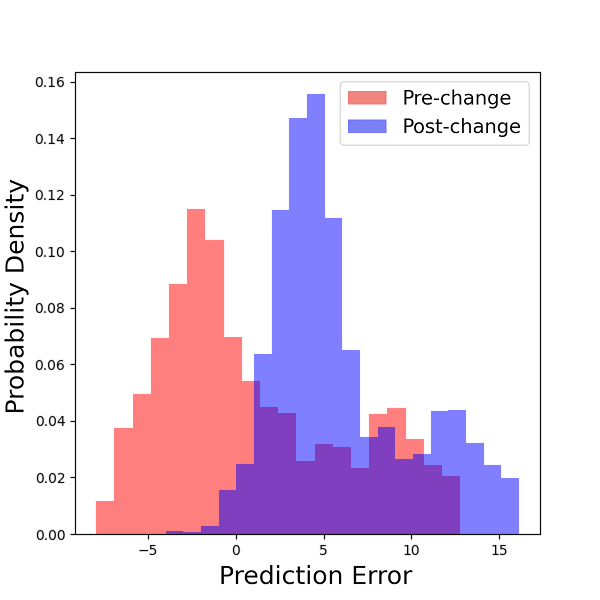}    
    \label{fig:fig5a}}
    \subfloat[FQA ]{\includegraphics[width=.35\linewidth]{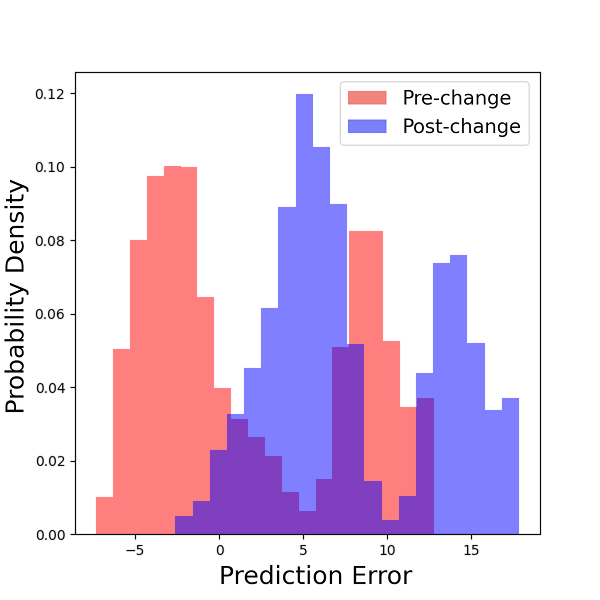}    
    \label{fig:fig5b}}
\caption{  {Fig. 5. } Illustration of the mixture Gaussian distributions for pre-change (red) and post-change (blue) error distributions in trajectory prediction, assessed using the ADE metric on the ApolloScape dataset. (a) Results from the GRIP++ model. (b) Results from the FQA model. Both examples highlight the observed shift in error distributions.}
\Description{}
\label{fig:fig5}
\end{figure}

Understanding the statistical structure of trajectory prediction errors is critical for detecting meaningful changes in autonomous driving systems. To address this, we analyze the pre- and post-change distributions of these errors using real-world datasets. Our key observation is that the data is approximately represented by a \textit{Gaussian mixture model} (GMM). % rather than a simple Gaussian distribution. 
As illustrated in Fig. \ref{fig:fig5}, the GMM fits for pre- and post-change ADE distributions in the ApolloScape dataset. Fig. \ref{fig:fig5a} presents results from the GRIP++ model, while Fig. \ref{fig:fig5b} depicts outcomes from the FQA model. Similar trends were consistently observed across other datasets (\eg, NGSIM and nuScenes) and for additional metrics (\eg, FDE and RMSE), though these results are omitted due to space constraints.

\section{METHODS}
\label{methods}
In this section, we present the key methods used for detecting OOD scenes, focusing on CUSUM as the primary algorithm, with Z-Score and Chi-Square tests serving as benchmark comparisons. CUSUM is a mature method known for its robust theoretical foundation and ability to quickly and reliably detect distributional shifts \cite{manogaran2018spatial, sun2022quickest}. In particular, CUSUM is designed to minimize detection delay while keeping the false alarm rate within acceptable bounds, providing formal performance guarantees in dynamic environments. As benchmarks, the Z-Score and Chi-Square methods are commonly used in literature for comparison \cite{cheadle2003analysis, mare2017nonstationary, wallis1941significance, chiang2024time}.

\subsection{Background of CUSUM}
\label{subsec: cusum background}
As discussed in Section \ref{sec: sequential_analysis}, we denote the unknown change point by $\gamma$, while $t_0$ represents the current system time, incrementing by 1 at each time step. The problem of detecting OOD can be formulated as detecting  a shift in the distribution of the prediction errors $\{\epsilon_t\}$. As shown in~Fig. \ref{fig: fig3}, we observe that on some popular datasets of trajectory prediction, the pre- and post-change distributions are Gaussian mixtures with two components. We evaluate the CUSUM algorithm under three common scenarios (Section \ref{sec: full-knowldge}, Section \ref{subsec: partial knowledge}, and Section \ref{subsec: unknown}).

The CUSUM algorithm detects distributional shifts by monitoring cumulative deviations in the log-likelihood ratio between pre-change (\(f_{\phi}\)) and post-change (\(g_{\theta}\)) distributions. The decision statistic \(W_t\) is computed iteratively 
\begin{equation}
W_t = 
\begin{cases} 
0, & \text{if } t = 0, \\
\max (W_{t-1} + \log L(\epsilon_t), 0), & \text{if } t \geq 1,
\label{eq:wt}
\end{cases}
\end{equation}
where \(L(\epsilon_t) = \frac{g_{\theta}(\epsilon_t)}{f_{\phi}(\epsilon_t)}\) is the likelihood ratio. A change is declared when \(W_t\) exceeds a threshold \(b\)
\[
\tau = \inf \{t \geq 1 : W_t \geq b\}.
\]

This method optimally balances the trade-off between false alarm rate $\text{FAR}(\tau < \gamma)$ and detection delay $\mathbb{E}[\tau - \gamma \mid \tau \geq \gamma]$. This method enables awareness of sequential decision reliability by continuously monitoring prediction errors.
For implementation details, see~Algorithm \ref{alg: cusum_detection}. %\ls{please check this "???"}

\begin{theorem}[\cite{veeravalli2014quickest}]
\label{thm: cusum full}
The CUSUM test is optimal for minimizing $\textup{WADD}$ given in \eqref{eq: far} subject to any fixed constraint of $\alpha$ on $\textup{FAR}$ given in \eqref{eq: far}. In addition, 
for any \(\alpha \in (0, 1)\), setting the threshold \(b = |\log \alpha|\) ensures  
\[
\textup{FAR}(\tau) \leq \alpha \quad \textup{and WADD}(\tau) = \frac{|\log \alpha|}{D_{KL}(g_{\theta}, f_{\phi})} (1+o(1)),
\]
as $\alpha\to 0$, where \(D_{KL}(g_{\theta}, f_{\phi})\) is the Kullback-Leibler divergence between pre-change and post-change distributions.  
\end{theorem}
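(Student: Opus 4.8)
The plan is to treat Theorem~\ref{thm: cusum full} as a restatement of classical results from the quickest-change-detection literature and to assemble a proof from three standard ingredients: (i) the exact minimax optimality of CUSUM under Lorden's formulation, (ii) a false-alarm bound tying the threshold $b$ to the constraint $\alpha$, and (iii) an asymptotic (first-order) delay analysis driven by the Kullback--Leibler divergence. First I would fix notation precisely: the observations $\{\epsilon_t\}$ are i.i.d.\ $f_\phi$ before $\gamma$ and i.i.d.\ $g_\theta$ after, the CUSUM statistic $W_t$ evolves via the recursion in \eqref{eq:wt} with increment $\log L(\epsilon_t)=\log\bigl(g_\theta(\epsilon_t)/f_\phi(\epsilon_t)\bigr)$, and $\tau=\inf\{t\ge 1: W_t\ge b\}$. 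I would also recall the representation $W_t=\max_{1\le k\le t}\sum_{i=k}^t \log L(\epsilon_i)$, which makes CUSUM a running maximum of log-likelihood-ratio partial sums and is the workhorse for all three steps.

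Second, for the optimality claim I would invoke the Moustakides/Lorden argument rather than reprove it: Lorden showed CUSUM is first-order asymptotically optimal, and Moustakides~\cite{moustakides1986optimal} strengthened this to exact optimality for $\mathrm{WADD}$ under a fixed constraint $\mathbb{E}_\infty[\tau]\ge 1/\alpha$ (equivalently $\mathrm{FAR}(\tau)\le\alpha$). The key structural facts I would cite are that the worst-case change time and worst-case pre-change history for CUSUM are attained at $\gamma=1$ (because $W_0=0$ is the least favorable starting value, and the statistic is a nonnegative recursion), so the essential supremum in \eqref{eq: delay} collapses to $\mathbb{E}_1[\tau]$, and that among all stopping times with the same false-alarm budget none does better. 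Third, for the false-alarm bound I would use the optional-stopping / martingale property: $\{e^{W_t}\}$ or more precisely the family of likelihood-ratio martingales gives, via a union bound over the candidate change points combined with Wald's likelihood-ratio identity (or directly via the fact that $e^{-b}$ dominates the overshoot-free crossing probability), the clean bound $\mathbb{E}_\infty[\tau]\ge e^{b}$; setting $b=|\log\alpha|=\log(1/\alpha)$ then yields $\mathrm{FAR}(\tau)=1/\mathbb{E}_\infty[\tau]\le\alpha$. For the delay, under $\mathbb{P}_1$ the increments $\log L(\epsilon_i)$ have positive mean $D_{KL}(g_\theta,f_\phi)$, so by the strong law / Wald's identity the partial sum crosses level $b$ after about $b/D_{KL}(g_\theta,f_\phi)$ steps, and the max-over-$k$ structure only helps; a renewal-theoretic overshoot argument upgrades this to $\mathbb{E}_1[(\tau-\gamma+1)^+]=b/D_{KL}(g_\theta,f_\phi)\,(1+o(1))$ as $b\to\infty$, i.e.\ as $\alpha\to 0$. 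Substituting $b=|\log\alpha|$ gives the stated $\mathrm{WADD}(\tau)=|\log\alpha|/D_{KL}(g_\theta,f_\phi)\,(1+o(1))$.

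I expect the main obstacle to be less a single hard estimate than a matter of regularity hygiene: the asymptotic delay formula and the renewal overshoot argument require $D_{KL}(g_\theta,f_\phi)\in(0,\infty)$ and typically a finite-second-moment (or at least uniform-integrability) condition on $\log L(\epsilon_t)$ under $\mathbb{P}_1$, and in our setting $f_\phi,g_\theta$ are Gaussian mixtures, so I would need to note that the log-likelihood-ratio of two two-component Gaussian mixtures has all the moments needed (its tails are controlled by the component with the heaviest-tailed ratio, which is still sub-exponential when variances are bounded away from $0$ and $\infty$). A second, more cosmetic obstacle is that the theorem as displayed references ``$\eqref{eq: far}$'' for $\mathrm{WADD}$ whereas $\mathrm{WADD}$ is actually \eqref{eq: delay}; in the writeup I would silently use the correct object. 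The cleanest exposition, then, is: (1) reduce $\mathrm{WADD}$ to $\mathbb{E}_1[\tau]$ via the least-favorable-$\gamma$ property and cite Moustakides for exact optimality; (2) prove $\mathbb{E}_\infty[\tau]\ge e^{b}$ by a likelihood-ratio-martingale / union-bound argument and set $b=|\log\alpha|$; (3) prove the first-order delay asymptotics by Wald/renewal theory on the LLR random walk; (4) remark that the Gaussian-mixture models of Section~\ref{subsec: Gaussian mixture experiment} satisfy the mild moment hypotheses so the general theory applies verbatim.
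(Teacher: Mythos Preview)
Your proposal is a correct and well-organized sketch of the classical proof, but you should be aware that the paper does not actually prove Theorem~\ref{thm: cusum full} at all: it is stated with the attribution \cite{veeravalli2014quickest} and simply invoked as a known result, followed only by the informal ``Rationale'' paragraph explaining the drift intuition. There is therefore no proof in the paper to compare against; your three-step assembly (Moustakides' exact optimality for the $\mathrm{WADD}$ criterion, the likelihood-ratio-martingale bound $\mathbb{E}_\infty[\tau]\ge e^b$ for the false-alarm control, and the Wald/renewal first-order delay asymptotics) is exactly the standard argument one finds in the cited reference and in \cite{lorden1971procedures,moustakides1986optimal}. Your remarks about the moment conditions needed for Gaussian-mixture log-likelihood ratios and about the mislabeled equation reference are both apt, but neither is addressed in the paper since no proof is attempted there.
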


\noindent
\textbf{Rationale.} After a change-point \(\gamma\), the observations \(\epsilon_t\) are distributed according to \(g_{\theta}\), and the expected log-likelihood ratio \(\mathbb{E}_{t \geq \gamma}[\log(\frac{g_{\theta}(\epsilon_t)}{f_{\phi}(\epsilon_t)})]\) equals the Kullback-Leibler divergence \(\text{$D_{KL}$}(g_{\theta} \parallel f_{\phi})\), which is positive. Conversely, before \(\gamma\), the observations are governed by \(f\), and \(\mathbb{E}_{t < \gamma}[\log(\frac{g_{\theta}(\epsilon_t)}{f_{\phi}(\epsilon_t)})] = -\text{$D_{KL}$}(f_{\phi} \parallel g_{\theta})\), which is negative. Consequently, with an increasing number of observations, the cumulative sum \(\sum_{t : t \geq \gamma} \log(\frac{g_{\theta}(\epsilon_t)}{f_{\phi}(\epsilon_t)})\) is expected to exceed the threshold \(b\). The \(\max\{\cdot, 0\}\) function in the statistic \(W_t\) keeps the statistic from going to $-\infty$ when the change time is large. 
%reduces the impact of random fluctuations in the log-likelihood ratio and helps identify the change-point \(\gamma\). 
The threshold \(b\) controls the sensitivity of the detection; increasing \(b\) makes the algorithm less prone to false alarms but also delays the detection of actual changes. 

\begin{figure}[!t]
\begin{algorithm}[H]
\KwIn{Threshold $b > 0$; pre-change distribution $f_{\phi}(\epsilon_t)$; post-change distribution $g_{\theta}(\epsilon_t)$}
\KwOut{Change point $\tau$ if detected}

\BlankLine
Initialize $W_0 \gets 0$\;%

%Learn parameters $\phi$ and $\theta$ from the data \;
% $f_{\phi}(\epsilon_t)$  \tcc{pre-change distribution} 
% $g_{\theta}(\epsilon_t)$ \tcc{post-change distribution} 
\BlankLine
\While{true}{
    get a new observation of the prediction error $\epsilon_t$\;
    \tcc{Compute the log-likelihood ratio}

    $L(\epsilon_t) =  \frac{g_{\theta}(\epsilon_t)}{f_{\phi}(\epsilon_t)}$

    $W_{t+1} \gets \max( W_{t} + \log L(\epsilon_t),0)$\;

     \If{$W_{t+1} \ge b$}{
        Declare a change point\;
        {\bf return} $\tau= t+1$\; 
        %{\bf Break}\;
    }
}

\caption{Ideal CUSUM with Parameterized Models}
\Description{}
\label{alg: cusum_detection} 
\end{algorithm} 
% \vspace*{-1.5\baselineskip}
\end{figure}

\subsection{CUSUM for OOD Scene Detection}
\label{subsec: algoirthms}
In this subsection, we present several variants of the CUSUM tests, each differing in its requirements for prior knowledge of the pre- and post-change distributions. 
Section \ref{sec: full-knowldge} presents the ideal CUSUM test where both pre- and post-change distributions are fully known and provided as inputs to the algorithm.  
Section \ref{subsec: partial knowledge} describes two tests that require only partial knowledge of the pre- and/or post-change distributions.
Section \ref{subsec: unknown} introduces a test that does not require any knowledge on the post-change distribution. 

\subsubsection{CUSUM with Complete Knowledge. }
\label{sec: full-knowldge}
From Section \ref{subsec: Gaussian mixture experiment}, we know that  
%We assume that 
both the pre-change (\( f_{\phi} \)) and post-change (\( g_{\theta} \)) distributions follow GMMs. The probability density function for GMM can be defined as  $ p(\epsilon_t) = \sum_{i=1}^{K} w_i \cdot \mathcal{N}(\epsilon_t \mid \mu_i, \sigma_i^2)$, where \( K \) is the number of Gaussian components in the mixture, and \( w_i \), \( \mu_i \), and \( \sigma_i^2 \), respectively, are the mixture weight, the mean, and the variance of component $i$.

In our experimental results in Fig.~\ref{fig: fig3}, we run Algorithm \ref{alg: cusum_detection} by plugging in the Gaussian mixture pdfs obtained from Section \ref{subsec: Gaussian mixture experiment} where $K=2$.  
Since this method is only a concrete instance of the ideal CUSUM, Theorem \ref{thm: cusum full} is applicable, \ie, this method is optimal. 

\subsubsection{CUSUM with Partial Knowledge. ~}
\label{subsec: partial knowledge}
In many real-world applications, post-change distributions are difficult to characterize precisely. Occasionally, in some cases, even pre-change distributions may be challenging to obtain.  In those cases, we need to approximate the post- or/and pre-change distributions. However, not all approximations are correct.  
We use the following definition to formalize the notion of correctness for OOD detection.
\begin{definition}
\label{def:approxCorr}
[Approximation correctness]
\label{def: correctness of cusum}
Let $\hat{L}(\epsilon_t)$ be the likelihood approximation used. Let $\hat{W}$ denote the CUSUM statistic defined as 
$\hat{W}_{t+1} = \max(\hat{W}_{t}+\log \hat{L}(\epsilon_t), ~0)$ with $\hat{W}_{0}=0$.  
We say the approximation $\hat{L}$ is correct if for 
%any finite threshold $0< b <\infty$ and 
any finite-time change-point $\gamma <\infty$, 
\begin{align*}
\mathbb{E}_{t < \gamma}[\log\hat{L}(\epsilon_t))]<0, ~~ \text{and} ~~~ 
% \mathbb{P}\{\limsup_t\hat{W_t} >b\} = 1.  
\mathbb{E}_{t \geq \gamma}[\log\hat{L}(\epsilon_t))]> 0. 
\end{align*} 
\end{definition} 
Intuitively, $\mathbb{E}_{t < \gamma}[\log\hat{L}(\epsilon_t))]<0$
prevents an overwhelming number of false alarms before a change occurs, while 
%$\mathbb{P}\{\limsup_t\hat{W_t} >b\} = 1$ 
$
\mathbb{E}_{t \geq \gamma}[\log\hat{L}(\epsilon_t))]> 0$
ensures that once a change occurs, it will be detected eventually.

\noindent \underline{\em Partial knowledge of post-change. ~}
%\label{sec: cusum-sinmix}
The exact parameters of the post-change distribution are often unknown. Instead, we leverage partial knowledge or side information to approximate the post-change distributions. %parameters. % based on past observations.
In particular, we consider the case when the overall mean $\mu_g$ and variance $\sigma^2_g$ of the post-change distribution are known. 
We use $\mathcal{N}(\epsilon_t \,|\,\mu_g, \sigma_g^2)$ -- a single Gaussian distribution with the given mean and variance -- to replace the true post-change distribution $g_{\theta}$ in computing the likelihood ratio in \eqref{eq:wt}, \ie, 
\begin{align*}
L(\epsilon_t) = \frac{\hat{g}_{\theta}(\epsilon_t)}{f_{\phi}(\epsilon_t)}, ~~~\text{where} ~   \hat{g}_{\theta}(\epsilon_t) = \mathcal{N}(\epsilon_t \,|\, \mu_g, \sigma_g^2). 
\end{align*}
Let $W_t^{p1}$ denote the CUSUM statistic with the above approximation, \ie, 
\begin{align}
\label{eq: cusum; partial; post1}
W_t^{p1} =  \max\left(0, ~ W_{t-1}^{p1} + \log \frac{\hat{g}_{\theta}(\epsilon_t)}{f_{\phi}(\epsilon_t)}\right), \quad W_0^{p1} = 0.
\end{align} 
%\ls{Need to decide on whether we need to formally introduce this statistic... wait to check the table...}

\noindent{\bf Rationale.}
To ensure correctness as per Definition \ref{def: correctness of cusum}, 
one needs to check 
%it is sufficient to satisfy  
\begin{align*}
\mathbb{E}_{t \geq \gamma}[\log(\frac{\hat{g}_{\theta}(\epsilon_t)}{f_{\phi}(\epsilon_t)})]>0, ~~~ \text{and} ~ \mathbb{E}_{t < \gamma}[\log(\frac{\hat{g}_{\theta}(\epsilon_t)}{f_{\phi}(\epsilon_t)})]<0.  
\end{align*}
Note that 
$
\mathbb{E}_{t < \gamma}[\log(\frac{\hat{g}_{\theta}(\epsilon_t)}{f_{\phi}(\epsilon_t)})] = \int_{\epsilon} f_{\phi}(\epsilon) \frac{\hat{g}_{\theta}(\epsilon)}{f_{\phi}(\epsilon)} d\epsilon = - D_{KL}(f_{\phi}\| \hat{g}_{\theta})<0.    
$
In addition, it holds that 
\begin{align*}
\mathbb{E}_{t \geq \gamma}[\log(\frac{\hat{g}_{\theta}(\epsilon_t)}{f_{\phi}(\epsilon_t)})] 
& = \int_{\epsilon} g_{\theta}(\epsilon) \log \frac{\hat{g}_{\theta}(\epsilon)}{f_{\phi}(\epsilon)} d\epsilon
= \int_{\epsilon} g_{\theta}(\epsilon) \log \frac{g_{\theta}(\epsilon)}{f_{\phi}(\epsilon)} \frac{\hat{g}_{\theta}(\epsilon)}{g_{\theta}(\epsilon)} d\epsilon  \\
& = D_{KL}(g_{\theta}(\epsilon)\| f_{\phi}(\epsilon)) - D_{KL}(g_{\theta}(\epsilon)\| \hat{g}_{\theta}(\epsilon)). 
\end{align*}
In general, the sign of $\mathbb{E}_{t \geq \gamma}[\log(\frac{\hat{g}_{\theta}(\epsilon_t)}{f_{\phi}(\epsilon_t)})]$ is undetermined, and depends on the relative magnitudes of $D_{KL}(g_{\theta}(\epsilon)\| f_{\phi}(\epsilon))$ and  $D_{KL}(g_{\theta}(\epsilon)\| \hat{g}_{\theta}(\epsilon))$. Fortunately, as shown in Table \ref{tab:kl_gripp_apollo}, in our experiments, its sign is consistently positive across metrics and models, with full numerical analysis presented in Section \ref{sec:num_results}.

\begin{table}[htbp] 
\centering
\caption{  {Table 2.} Numerical evaluation of $\mathbb{E}_{t \geq \gamma}[\log(\frac{\hat{g}_{\theta}(\epsilon_t)}{f_{\phi}(\epsilon_t)})]$ under the FQA model, using ApolloScape dataset and ADE metric.}
\vspace{-0.3cm}
%\scriptsize
\begin{tabular}{|c||c|c|c|}
    \hline
    & \multicolumn{3}{c|}{
    \textbf{$\mathbb{E}_{t \geq \gamma}[\log(\frac{\hat{g}_{\theta}(\epsilon_t)}{f_{\phi}(\epsilon_t)})]$}} \\
    \hline 
    Components 
    & $D_{KL}(g_\theta\| f_{\phi})$
    & $D_{KL}(g_{\theta}\| \hat{g}_{\theta})$ 
    & sign of difference\\
    \hline
    Values & 0.2026 & 0.0405 & positive \\
    \hline
\end{tabular}
\label{tab:kl_gripp_apollo}
\end{table}

% \begin{table}[htbp] 
% \centering
% \caption{  {Table 2.} Expectation of the selected FQA model, ApolloScape dataset, and ADE metric.}
% \vspace{-0.3cm}
% %\scriptsize
% \begin{tabular}{|c||c|c|c||c|c|c|}
%     \hline
%     & \multicolumn{3}{c||}{
%     \textbf{$\mathbb{E}_{t < \gamma}[\log(\frac{\hat{g}_{\theta}(\epsilon_t)}{f_{\phi}(\epsilon_t)})]$}} & \multicolumn{3}{c|}{
%     \textbf{$\mathbb{E}_{t \geq \gamma}[\log(\frac{\hat{g}_{\theta}(\epsilon_t)}{f_{\phi}(\epsilon_t)})]$}} \\
%     \hline
%     Cases 
%     & $D_{KL}(f_{\phi}(\epsilon)\| g_{\theta}(\epsilon))$ 
%     & $D_{KL}(f_{\phi}(\epsilon)\| \hat{g}_{\theta}(\epsilon))$
%     & $D_{KL}(\hat{f}_{\phi}(\epsilon)\| \hat{g}_{\theta}(\epsilon))$ 
%     & $D_{KL}(g_\theta(\epsilon)\| f_{\phi}(\epsilon))$
%     & $D_{KL}(\hat{g}_{\theta}(\epsilon)\| f_{\phi}(\epsilon)$ 
%     & $D_{KL}(\hat{g}_{\theta}(\epsilon)\| \hat{f}_{\phi}(\epsilon)$ \\
%     \hline
%     Values & 4.4873 & 1.7186 & 0.9501 & 0.7842 & 0.6482 & 0.6175 \\
%     \hline
% \end{tabular}
% \label{tab:kl_gripp_apollo}
% \end{table}

%\vspace{1em} 
\noindent \underline{\em Partial knowledge of both pre- and post-change. ~}
%\label{sec: cusum-single}
Occasionally, the pre-change distribution may not be known, which may result from a lack of sufficient data to reconstruct the true distribution. 
We consider the case when the overall means ($\mu_f$ and $\mu_g$) and variances ($\sigma_f^2$ and $\sigma^2_g$) of pre- and post-change distributions are known. 
We use single Gaussians to approximate the pre- and post-change distributions in computing the likelihood ratio in \eqref{eq:wt}, \ie, 
\begin{align*}
L(\epsilon_t) = \frac{\hat{g}_{\theta}(\epsilon_t)}{\hat{f}_{\phi}(\epsilon_t)}, 
\end{align*}
where $\hat{f}_{\theta}(\epsilon_t) = \mathcal{N}(\epsilon_t \,|\, \mu_f, \sigma_f^2)$  
and $\hat{g}_{\theta}(\epsilon_t) = \mathcal{N}(\epsilon_t \,|\, \mu_g, \sigma_g^2)$. 
Let $W_t^{p2}$ denote the CUSUM statistic, \ie, 
\begin{align}
\label{eq: cusum; partial; post2}
W_t^{p2} =  \max\left(0, ~ W_{t-1}^{p2} + \log \frac{\hat{g}_{\theta}(\epsilon_t)}{\hat{f}_{\phi}(\epsilon_t)}\right), \quad W_0^{p2} = 0.
\end{align} 

\noindent{\bf Rationale.} 
Different from the case when pre-change distribution is known, here the signs of $\mathbb{E}_{t \geq \gamma}[\log(\frac{\hat{g}_{\theta}(\epsilon_t)}{\hat{f}_{\phi}(\epsilon_t)})]$ and $\mathbb{E}_{t < \gamma}[\log(\frac{\hat{g}_{\theta}(\epsilon_t)}{\hat{f}_{\phi}(\epsilon_t)})]$ are not generally predetermined. In particular, 
\begin{align*}
\mathbb{E}_{t \geq \gamma}[\log(\frac{\hat{g}_{\theta}(\epsilon_t)}{\hat{f}_{\phi}(\epsilon_t)})] = D_{KL}(g_{\theta}\| \hat{f_{\phi}}) - D_{KL}(g_{\theta}\|\hat{g}_{\theta}), 
~~ \text{and} ~~  
\mathbb{E}_{t < \gamma}[\log(\frac{\hat{g}_{\theta}(\epsilon_t)}{\hat{f}_{\phi}(\epsilon_t)})] 
= - D_{KL}(f_{\phi}\| \hat{g}_{\theta}) + D_{KL}(f_{\phi}\|\hat{f}_{\phi}). 
\end{align*}
Fortunately, in our experiments, the former is >0 and the latter <0, satisfying the correctness criterion in Definition \ref{def: correctness of cusum}.

\begin{table}[htbp] 
\centering
\caption{  {Table 3.} Numerical evaluation of $\mathbb{E}_{t <\gamma}[\log(\frac{\hat{g}_{\theta}(\epsilon_t)}{\hat{f}_{\phi}(\epsilon_t)})]$ and $\mathbb{E}_{t \geq \gamma}[\log(\frac{\hat{g}_{\theta}(\epsilon_t)}{\hat{f}_{\phi}(\epsilon_t)})]$ under the FQA model, using ApolloScape dataset and ADE metric.}
\vspace{-0.3cm}
%\scriptsize
\begin{tabular}{|c||c|c|c||c|c|c|}
    \hline
    & \multicolumn{3}{c||}{
    \textbf{$\mathbb{E}_{t < \gamma}[\log(\frac{\hat{g}_{\theta}(\epsilon_t)}{\hat{f}_{\phi}(\epsilon_t)})]$}} & \multicolumn{3}{c|}{
    \textbf{$\mathbb{E}_{t \geq \gamma}[\log(\frac{\hat{g}_{\theta}(\epsilon_t)}{\hat{f}_{\phi}(\epsilon_t)})]$}} \\
    \hline
    Cases 
    & $D_{KL}(f_{\phi}\|\hat{f}_{\phi})$ 
    & $D_{KL}(f_{\phi}\| \hat{g}_{\theta})$
    & sign of difference  
    & $D_{KL}(g_\theta\| \hat{f}_{\phi})$
    & $D_{KL}(g_{\theta}\| \hat{g}_{\theta})$ 
    & sign of difference  \\
    \hline
    Values & 0.3584 & 0.4044 & negative & 0.2885 & 0.0405  & positive \\
    \hline
\end{tabular}
\label{tab:kl_gripp_apollo: partial numerical}
\end{table}

\subsubsection{CUSUM with Unknown Knowledge on Post-change Distribution.}
\label{subsec: unknown}
Now, we consider the setting where no information about the post-change distribution is available for constructing an approximation of the likelihood ratio $L(\epsilon_t) = \frac{g_{\theta}(\epsilon_t)}{f_{\phi}(\epsilon_t)}$. 
We propose a robust CUSUM test that accommodates such a lack of knowledge
%in the post-change distribution by designing a robust test based on 
based on the notion of the least favorable distribution. 
%
%considering a range of possible shifts in the pre-change distribution. 
Analysis of the actual pre- and post-change data (see Fig.~\ref{fig:fig5}) of trajectory prediction suggests that often the post-change GMM is an approximately location-shifted version of the pre-change GMM. Motivated by this, we design our robust test based on the notion of minimum shift. 
Specifically, we use the available data to learn the pre-change model 
$f_{\phi}(\epsilon_t)$. We then design the robust CUSUM test by selecting $f_{\phi}(\epsilon_t-\kappa)$ as the approximation of the post-change distribution, where $\kappa$ is a putative choice for the distributional shift. This gives us the CUSUM statistic
\begin{equation}
\label{eq:robustCUSUM_shift}
W_t^{(\kappa)} = \max\left(0, W_{t-1}^{(\kappa)} + \log \frac{f_{\phi}(\epsilon_t-\kappa)}{f_{\phi}(\epsilon_t)}\right), \quad W_0^{(\kappa)} = 0.
\end{equation}
The shift parameter $\kappa$ can be chosen either using prior knowledge or based on the standard deviation of the pre-change model. 
Below, we prove that this test is robustly optimal under certain assumptions on the actual post-change distribution.  

Define the following classes of post-change distributions:
\begin{equation}
\label{eq:postchange_models}
    \begin{split}
        \mathcal{P}_1 &= \{f_{\phi}(\epsilon_t-\eta): \eta \geq \kappa\} \\
        \mathcal{P}_2 &= \text{convex hull}(\mathcal{P}_1).
    \end{split}
\end{equation}
Thus, $\mathcal{P}_1 $ collects all possible post-change distributions that are a shifted version of the pre-change model, with the shift $\eta$ being more than the shift $\kappa$ used to design the test in \eqref{eq:robustCUSUM_shift}. The family $\mathcal{P}_2$ is the convex hull (finite convex combinations) of the distributions in $\mathcal{P}_1 $. 
Next, consider the following robust optimization formulation:
\begin{equation}
\label{eq:robust_Lorden}
    \begin{split}
        \min_\tau &\; \sup_{P \in \mathcal{P}} \text{WADD}^P(\tau)\\
        \text{subject to} & \; \text{FAR}(\tau) \leq \alpha,
    \end{split}
\end{equation}
where $\text{WADD}^P(\tau)$ is the delay when the post-change distribution is $P$. The family $\mathcal{P}$ can be either $\mathcal{P} = \mathcal{P}_1$ or $\mathcal{P} = \mathcal{P}_2$ defined in \eqref{eq:postchange_models}. We now state our main result on robust optimality.

\begin{theorem}
    The CUSUM algorithm in \eqref{eq:robustCUSUM_shift} is asymptotically robust optimal for the problem in \eqref{eq:robust_Lorden} for $\mathcal{P} = \mathcal{P}_1$ or $\mathcal{P} = \mathcal{P}_2$ as $\alpha \to 0$. 
\end{theorem}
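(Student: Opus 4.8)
The plan is to reduce the robust optimality statement to a classical result about least favorable distributions (LFDs) in the QCD literature, in the spirit of Unnikrishnan, Veeravalli, and Meyn, and of Molloy and Ford. First I would establish that within the shift family, the "smallest" shift $\kappa$ yields the least favorable distribution: intuitively, among all $\eta \geq \kappa$, the distribution $f_\phi(\cdot - \kappa)$ is the hardest to distinguish from the pre-change law $f_\phi$, because the Kullback--Leibler divergence $D_{KL}(f_\phi(\cdot-\eta)\,\|\,f_\phi)$ is monotonically nondecreasing in $\eta \geq 0$ for the relevant (e.g. GMM / log-concave-ish) pre-change densities. I would make this precise by a stochastic-ordering / data-processing argument, or by direct differentiation of $\eta \mapsto D_{KL}(f_\phi(\cdot-\eta)\,\|\,f_\phi)$, showing it is $0$ at $\eta=0$ and increasing thereafter. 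This identifies $Q^\ast = f_\phi(\cdot-\kappa)$ as the candidate LFD for $\mathcal{P}_1$.

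Next I would invoke the standard sufficient condition for CUSUM-with-LFD to be robust optimal: if $Q^\ast \in \mathcal{P}$ and for every $P \in \mathcal{P}$ the expected log-likelihood ratio satisfies $\mathbb{E}_P\!\left[\log\frac{Q^\ast(\epsilon)}{f_\phi(\epsilon)}\right] \geq D_{KL}(Q^\ast\,\|\,f_\phi) = \mathbb{E}_{Q^\ast}\!\left[\log\frac{Q^\ast(\epsilon)}{f_\phi(\epsilon)}\right]$, then the CUSUM test tuned to $Q^\ast$ simultaneously (i) meets the FAR constraint (because the pre-change law is unchanged, so Theorem~\ref{thm: cusum full} applies verbatim with $b = |\log\alpha|$), and (ii) achieves, for every $P \in \mathcal{P}$, a WADD no larger than $\frac{|\log\alpha|}{D_{KL}(Q^\ast\|f_\phi)}(1+o(1))$, which is exactly the asymptotic delay the oracle CUSUM would incur against the worst case $Q^\ast$ itself. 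Hence no test can do better in the minimax sense as $\alpha \to 0$, giving asymptotic robust optimality. The key inequality to verify for $\mathcal{P}_1$ is therefore: for all $\eta \geq \kappa$,
\[
\mathbb{E}_{f_\phi(\cdot-\eta)}\!\left[\log\frac{f_\phi(\epsilon-\kappa)}{f_\phi(\epsilon)}\right] \;\geq\; \mathbb{E}_{f_\phi(\cdot-\kappa)}\!\left[\log\frac{f_\phi(\epsilon-\kappa)}{f_\phi(\epsilon)}\right].
\]
By a change of variables $\epsilon \mapsto \epsilon + \kappa$ this is equivalent to $\mathbb{E}_{f_\phi(\cdot-(\eta-\kappa))}[g(\epsilon)] \geq \mathbb{E}_{f_\phi}[g(\epsilon)]$ where $g(\epsilon) = \log\frac{f_\phi(\epsilon)}{f_\phi(\epsilon+\kappa)}$; establishing this amounts to showing $g$ is nondecreasing (so that the first-order stochastic dominance of $f_\phi(\cdot-(\eta-\kappa))$ over $f_\phi$ gives the inequality), i.e. that $f_\phi$ has a monotone likelihood ratio in the shift, which I would assume (or verify) for the GMM family in question.

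For the convex-hull family $\mathcal{P}_2$, I would extend the argument by linearity: any $P \in \mathcal{P}_2$ is a finite mixture $\sum_i \lambda_i f_\phi(\cdot - \eta_i)$ with $\eta_i \geq \kappa$, so $\mathbb{E}_P[\log\frac{Q^\ast}{f_\phi}] = \sum_i \lambda_i \mathbb{E}_{f_\phi(\cdot-\eta_i)}[\log\frac{Q^\ast}{f_\phi}] \geq \sum_i \lambda_i D_{KL}(Q^\ast\|f_\phi) = D_{KL}(Q^\ast\|f_\phi)$, so the same LFD $Q^\ast$ works and $Q^\ast \in \mathcal{P}_1 \subseteq \mathcal{P}_2$. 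I would then quote the general robust-QCD theorem (Unnikrishnan--Veeravalli--Meyn / Molloy--Ford) that converts this "dominating log-likelihood-ratio drift" condition plus $Q^\ast$-in-the-family into asymptotic robust optimality of the corresponding CUSUM, and plug in $b = |\log\alpha|$ via Theorem~\ref{thm: cusum full}. The main obstacle I anticipate is the monotone-likelihood-ratio / KL-monotonicity step for GMM pre-change densities: a two-component Gaussian mixture need not be log-concave, so $g(\epsilon) = \log\frac{f_\phi(\epsilon)}{f_\phi(\epsilon+\kappa)}$ is not automatically monotone for all $\kappa$. I would handle this either by restricting to the regime of $\kappa$ observed empirically (where the drift inequality can be checked numerically, analogous to Tables~\ref{tab:kl_gripp_apollo} and \ref{tab:kl_gripp_apollo: partial numerical}), or by adding a mild regularity assumption on the mixture parameters (well-separated or comparable-variance components) under which the MLR-in-shift property provably holds; the rest of the argument is then routine given Theorem~\ref{thm: cusum full}.
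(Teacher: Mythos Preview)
Your proposal is correct and follows essentially the same route as the paper: identify $f_\phi(\cdot-\kappa)$ as the least favorable distribution, verify the requisite drift/stochastic-boundedness conditions, and invoke the robust CUSUM optimality theorem of Molloy and Ford \cite{molloy2017misspecified}. The paper's own proof is in fact much terser---it simply asserts that the weak stochastic boundedness conditions (14)--(15) of \cite{molloy2017misspecified} hold and applies Theorem~3 there---so your sketch is more detailed, and your caveat about the MLR property possibly failing for general GMMs is a valid concern that the paper does not address.
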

\begin{proof}
    The minimal shift distribution $f_{\phi}(\epsilon_t-\eta)$ satisfies the weak stochastic boundedness conditions (14) and (15) given in \cite{molloy2017misspecified}.  The theorem then follows from Theorem 3 in \cite{molloy2017misspecified}. 
\end{proof}

We remark that the robust test is approximately correct, according to Definition~\ref{def:approxCorr}, for every possible post-change law in $\mathcal{P}_1$ and $\mathcal{P}_2$.

% Adding figures. 
%\subsubsection{ Figures } \textcolor{blue}{[TG:For Taposh to view and write analysis]}  
It is important to choose the minimal shift $\kappa$ carefully in \eqref{eq:robustCUSUM_shift}. In Fig.~\ref{fig:fig6_}, we compare the performance of the robust CUSUM test with other nonrobust CUSUM tests. A non-robust CUSUM test is one where the shift chosen is too large and hence affected underperforms. In the figures, $\kappa =1$, and the actual post-change corresponds to a shift of $2.5$. 

\begin{figure}[htbp] 

\captionsetup{labelformat=empty}
    \centering
    {\includegraphics[width=.7\linewidth]{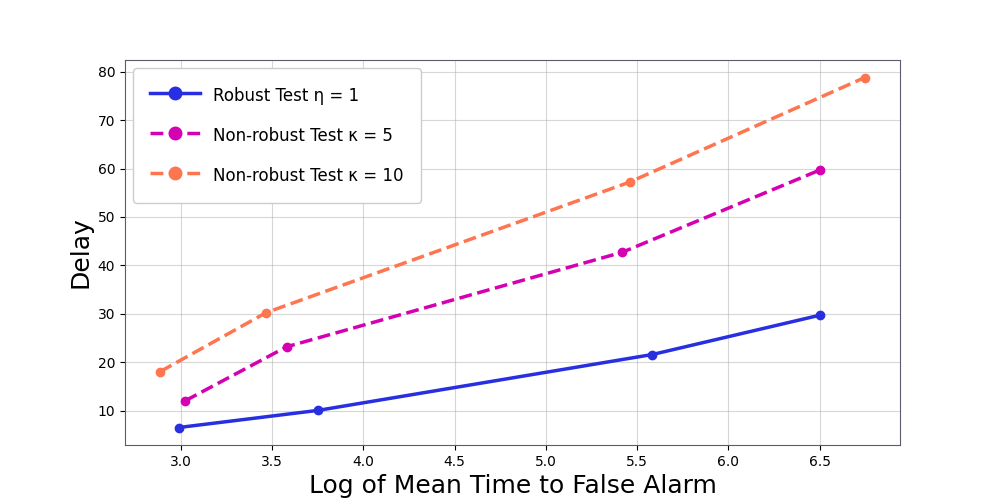}     \label{fig: subfig1}}
\caption{  {Fig. 6. }For the actual gap $\eta = 2.5$, we set the minimum test shift parameter to $\kappa = 1$ to evaluate robustness. The robust test (blue curve) with shift = 1 achieves successful detection, validating its effectiveness. In contrast, the non-robust tests ( pink curve with shift = 5 and orange curve with shift = 10) fail to maintain detection capability, as their parameter choices violate the robustness criteria.}
\Description{}
\label{fig:fig6_}
%\vspace{-0.3cm}
\end{figure}

\subsection{Benchmarks}
\subsubsection{Z-Score Time-series Detection}
The Z-Score method detects change points by measuring how much a data point deviates from a moving average. 
For any given window size $w$, 
define the moving average as $\bar{\epsilon}_t := \frac{1}{w} \sum_{r=t-w+1}^{t} \epsilon_r$, 
and the moving standard deviation as $\sigma_t = \sqrt{\frac{1}{w} \sum_{r=t-w+1}^{t} (\epsilon_r - \bar{\epsilon}_t)^2}$.  The Z-score of $\epsilon_t$ is 
\begin{align}
\label{eq: z-score}
z_t = \frac{\epsilon_t - \bar{\epsilon}_t}{\sigma_t}
\end{align} 
% $z_t$ is calculated defined as $z_t = \frac{\epsilon_t - \bar{\epsilon}_t}{\sigma_t}$.   
% For each time point \(t\), the Z-Score method calculates the moving average $\bar{\epsilon}_t$ over a %defined 
% window size $w$ as $\bar{\epsilon}_t = \frac{1}{w} \sum_{r=t-w+1}^{t} \epsilon_r$, 
% %, where $\epsilon_i$ represents the time points. 
% and the moving standard deviation $\sigma_t$ as $\sigma_t = \sqrt{\frac{1}{w} \sum_{r=t-w+1}^{t} (\epsilon_r - \bar{\epsilon}_t)^2}$. Using these, the Z score $z_t$ for each data point $\epsilon_t$ at time $t$ is then calculated as $z_t = \frac{\epsilon_t - \bar{\epsilon}_t}{\sigma_t}$.
% The Z score $z_t$ is defined as $z_t = \frac{\epsilon_t - \bar{\epsilon}_t}{\sigma_t}$. 
% %for each data point $\epsilon_t$ at time $t$ is then calculated as $z_t = \frac{\epsilon_t - \bar{\epsilon}_t}{\sigma_t}$.

A threshold $b$ is set for the Z-Score, often based on how sensitive the detection should be. When the absolute value of the Z-Score \(|z_t|\) exceeds the threshold, the Z-Score method declares that a change has occurred. That is,  
% it indicates that the data point \(\epsilon_t\) is significantly deviating from the moving average. This declares a potential change point
   \[
   \tau = \min\{ t : |z_t| > b  \}.
   \]

% In summary, the Z-Score method detects change points by identifying data points whose deviations (\eg variation, and standard deviation) exceed a pre-set threshold, indicating a shift in the underlying distribution.

\subsubsection{Chi-Square Time-series Detection}
Chi-Square test is commonly used to evaluate the independence of categorical variables and the fit between observed and expected frequencies. 
Specifically, we use Pearson’s chi-square test to compare observed frequencies in the time-series with expected frequencies under normal behavior \cite{franke2012chi}. Significant deviations from the expected values suggest potential anomalies. The test statistic is computed as follows: 
$ 
    \chi_t^2 = \sum_{r=t-w+1}^{t} \frac{\left( g_{\theta}(\epsilon_t) - f_{\phi}(\epsilon_t) \right)^2}{f_{\phi}(\epsilon_t)},
$
where \( w \) represents the window size. 
%the number of time steps over which the test statistic is computed (\ie window size). 
The null hypothesis assumes no significant difference between the frequencies. A large \(\chi^2\) value indicates an anomaly, and a change is declared when \(\chi^2\) exceeds a predefined threshold. For any given $b$, a change is declared accordingly to the following rule 
%the change time \(\tau\) is defined as
\[ 
\tau = \min\{ t : \chi_t^2 > b \}.
\]

%The Chi-Square test detects anomalies by comparing observed and expected frequencies in time-series data, declaring a change when the test statistic \(\chi^2\) exceeds a predefined threshold, signaling a shift in the underlying distribution.
\section{EXPERIMENTS AND RESULTS}

\subsection{Simulation Setup}
\noindent
{\bf Datasets.} 
Table \ref{tab: tab2} summarizes the characteristics of the three datasets used. The ApolloScape \cite{huang2018ApolloScape}, NGSIM \cite{fhwa2020}, and NuScenes \cite{caesar2020nuScenes} datasets are widely used in trajectory prediction and autonomous vehicle research due to their diverse and complex real-world driving scenes. ApolloScape provides a rich multimodal dataset, including camera images, LiDAR point clouds, and approximately 50 minutes of manually annotated vehicle trajectories. NGSIM focuses on freeway traffic behavior, offering detailed vehicle trajectories recorded over 45 minutes on highways US-101 and I-80. NuScenes captures 1000 diverse driving scenes in Boston and Singapore, two cities known for their challenging traffic conditions. For trajectory prediction, we choose the history trajectory length ($L_I$) and future trajectory length ($L_O$) based on the recommendations provided by the dataset’s authors. For each dataset, we randomly select 2500 scenes to serve as test cases.

\begin{table}[ht] 
\vspace{-0.3cm} 

\centering
\caption{  {Table 4.} Summary of datasets.}
\vspace{-0.3cm}
\label{tab: tab2}
%\scriptsize % You can replace this with %\scriptsize for even smaller text
\begin{center}
\begin{tabular}{|c||c||c||c||c|}
\hline
\textbf{Name} & \textbf{Scenario} & \textbf{Map} & $\mathbf{L_I}$ & $\mathbf{L_O}$ \\
\hline
ApolloScape \cite{huang2018ApolloScape} & Urban & $\times$  & 6  & 6 \\
\hline
NGSIM \cite{fhwa2020} & Highway & $\times$ & 15 & 25 \\
\hline
NuScenes \cite{caesar2020nuScenes} & Urban & \checkmark & 4  & 12 \\
\hline
\end{tabular}
\end{center}
\end{table}

\begin{table}[ht] 
\vspace{-0.3cm}
%\scriptsize  % Reduces the font size
\centering
\caption{  {Table 5.} Summary of models.} 
\vspace{-0.3cm}
\label{tab: tab3}
\begin{center}
\begin{tabular}{|c||c||c||c|}
\hline
\textbf{Name} & \textbf{Input features} & \textbf{Output format} & \textbf{Network} \\
\hline
GRIP++ \cite{li2019GRIP++} & location + heading & single-prediction & Conv + GRU \\
\hline 
FQA \cite{kamra2020multi} & location  & single-prediction & LSTM \\
\hline
\end{tabular}
\end{center}
%\vspace{-0.5cm}
\end{table}

\noindent
{\bf Trajectory Prediction Models.}
\label{Trajectory_Prediction_Models}
Table \ref{tab: tab3} summarizes two benchmark models used for trajectory prediction. We select GRIP++ \cite{li2019GRIP++} and FQA\cite{kamra2020multi} as our trajectory prediction models due to their proven effectiveness on widely used datasets like ApolloScape and NGSIM from prior research. GRIP++ employs a graph-based structure with a two-layer GRU network, offering fast and accurate short- and long-term predictions, ranking \#1 in the 2019 ApolloScape competition, making it an ideal fit for our experiments by minimizing model noise and enhancing algorithm performance evaluation.  FQA introduces a fuzzy attention mechanism to model dynamic agent interactions, showing strong performance across diverse domains, including traffic and human motion which is well-suited for evaluating prediction performance in nuScenes dataset.

\vskip \baselineskip   

\noindent{\bf Deceptive OOD Scene Generation.}
We generate deceptive OOD scenes by introducing subtle, physically constrained changes in driving behaviors. These changes, often imperceptible to human observers, are designed to significantly degrade the prediction performance of machine learning models. Inspired by adversarial perturbation techniques \cite{zhang2022adversarial}, we apply these perturbations in a single-frame prediction setting for each dataset-model combination.  The results, presented in Table \ref{tab: tab4}, demonstrate that such perturbations consistently worsen trajectory prediction accuracy. On average, ADE increases by 148.8\%, FDE by 176.2\%, and RMSE by 50.1\%. Notably, FDE experiences the highest increase (176.2\%), highlighting greater vulnerability in long-term predictions compared to short-term or overall error measures. These findings suggest that even minor trajectory shifts could have real-world consequences if OOD scenes cannot be detected.

\begin{table}[htbp]
\caption{  {Table 6.} Average prediction error before and after the perturbation.}
\vspace{-0.3cm}
\label{tab: tab4}
%\scriptsize  % Reduces the font size
\begin{center}
\begin{tabular}{|c||c||c||c||c|}
\hline
 {Model} &  {Dataset} &  {ADE (m)} &  {FDE (m)} &  {RMSE (m)} \\
\hline
 
  & ApolloScape & 1.68 / 4.15 & 2.11 / 8.73 & 6.16 / 6.26 \\
\cline{2-5}
{ GRIP++}  & NGSIM & 4.35 / 7.89 & 7.50 / 14.82 & 1.23 / 3.36 \\
\cline{2-5}
& NuScenes & 5.82 / 8.16 & 5.03 / 8.78 & 5.74 / 5.33 \\
\hline 
  & ApolloScape & 1.87 / 4.95 & 2.91 / 9.53 & 6.96 / 6.53 \\
\cline{2-5}
{ FQA} 
  & NGSIM & 4.63 / 8.69 & 7.88 / 15.62 & 1.63 / 4.16 \\
\cline{2-5}
  & NuScenes & 6.62 / 8.96 & 5.83 / 9.58 & 6.54 / 6.13 \\
\hline
\end{tabular}
\end{center}
%\vspace{-0.5cm}
\end{table}

\begin{figure}[t!] %\vspace{-0.1cm}
\captionsetup{labelformat=empty}
    {\includegraphics[width=.7\linewidth]{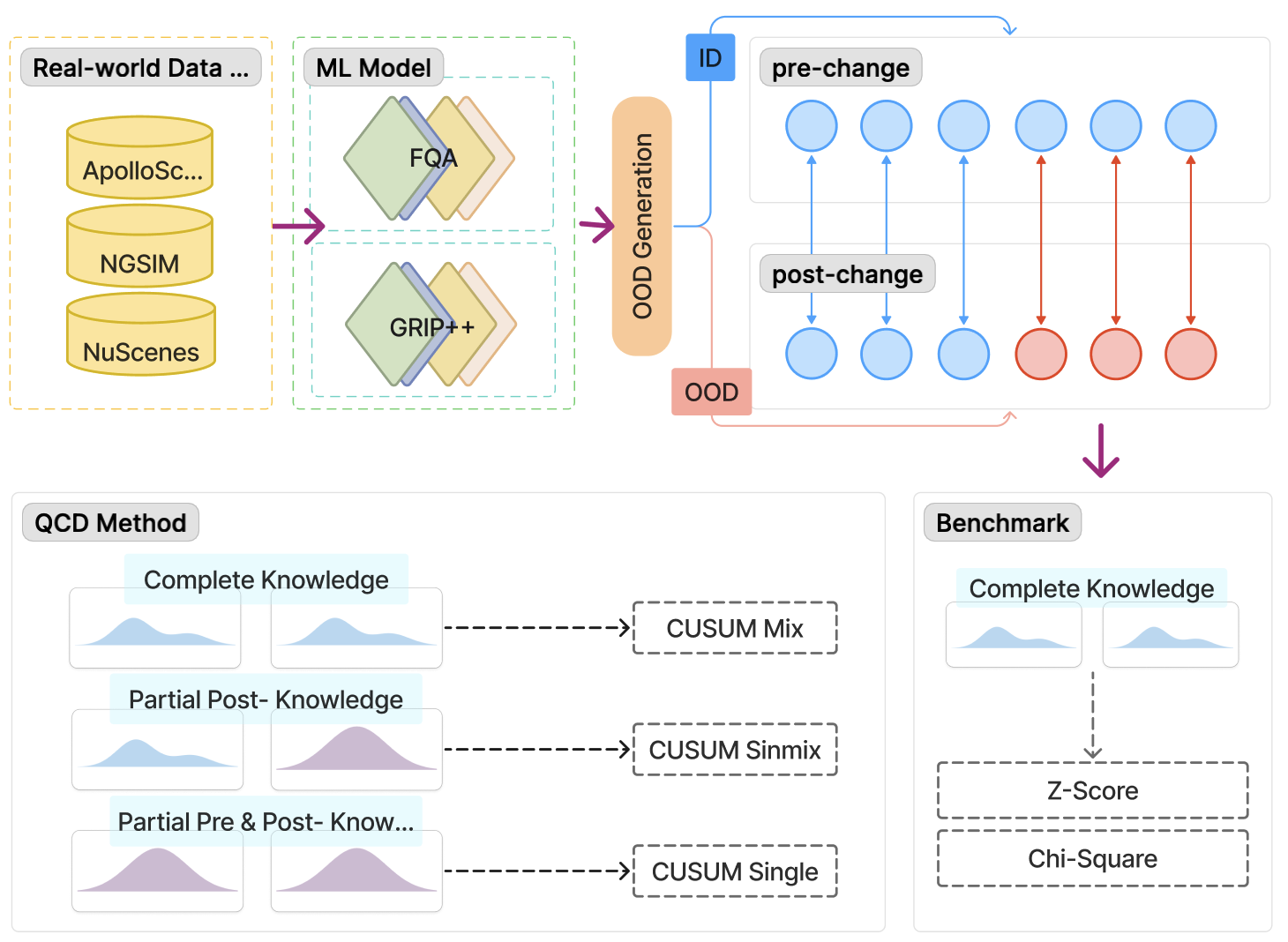}  }
\caption{  {Fig. 7. } Workflow of OOD detection in trajectory prediction. The process includes data loading, model training, OOD generation, and OOD detection using CUSUM variants and statistical benchmarks.}
\Description{}
\label{fig:workflow}
\vspace{-0.3cm}
\end{figure}

\noindent
\textbf{Workflow Design. }
Fig.~\ref{fig:workflow} illustrates our experimental workflow, which consists of four main stages: data loading, model training, OOD generation, and OOD detection. For each model-dataset combination (summarized in Tables~\ref{tab: tab2} and \ref{tab: tab3}), trajectory predictions for both ID and OOD scenarios are generated following \cite{zhang2022adversarial}. The resulting prediction error sequences are fitted into pre-change ($f_\phi$) and post-change ($g_\theta$) distributions. As discussed in Section~\ref{subsec: algoirthms}, the performance of the CUSUM algorithm varies based on the level of prior knowledge about $f_\phi$ and $g_\theta$. To address different scenarios, we employ following CUSUM variants for OOD detection and compared with benchmarks (\ie, Z-Score~\cite{cheadle2003analysis} and Chi-Square~\cite{rana2015chi}):  

\begin{itemize}
    \item \textbf{CUSUM Mix}: Assumes full knowledge of both $f_\phi$ and $g_\theta$, modeling them as Gaussian mixtures observed in real-world dataset experiments.  
    \item \textbf{CUSUM Sinmix}: Assumes full knowledge of $f_\phi$ but only partial knowledge of $g_\theta$, using Gaussian estimation to approximate the post-change distribution.  
    \item \textbf{CUSUM Single}: Assumes partial knowledge of both $f_\phi$ and $g_\theta$, modeling each as a single Gaussian distribution.
\end{itemize}

% \noindent \textbf{Evaluation of Detection. }
% We assess the performance of our algorithms using two key metrics mentioned in Theorem \ref{thm: cusum full}, which are widely recognized in the literature \cite{lai2008quickest, lau2018binning, liang2024quickest}. These metrics are particularly relevant for evaluating OOD detection in trajectory prediction, focusing on:

% \begin{itemize}
%     \item {\bf Detection Delay}: This measures the time lag between the occurrence of a distribution shift in prediction errors and its detection by the algorithm. A shorter delay indicates a more responsive system.
%     \item {\bf False Alarm}: This quantifies the frequency at which the algorithm incorrectly flags in-distribution data as OOD. A lower false alarm signifies higher reliability in distinguishing true distribution shifts.
% \end{itemize}

% These metrics provide a comprehensive understanding of how effectively the algorithms identify and respond to shifts in prediction error distributions, ensuring robust OOD detection in trajectory prediction tasks.

\subsection{Numerical Results}
\label{sec:num_results}
To assess the performance of our CUSUM-based change detection approach, we employ the methodology described in Section \ref{subsec: algoirthms}. Below, we present numerical results that measure the expectation values across varying levels of distributional knowledge. As illustrated in Table \ref{tab:MTFA_distribution_math_algo}, each knowledge level corresponds to a distinct algorithmic implementation. The expectation value varies depending on the integration base: for \(t < \gamma\), we integrate with respect to \(f_\phi\), whereas for \(t \geq \gamma\), we use \(g_\theta\) as the base.

\begin{table}[h]
    \caption{  {Table 7.} Representation of expectation values for corresponding algorithms under varying levels of distribution knowledge. }
    \vspace{-0.3cm}
    %\scriptsize  % Reduces the font size
    \centering
    \renewcommand{\arraystretch}{1.5}
    \begin{tabular}{|c|c|c|c|}
        \hline
        Distribution & Algorithm & 
        \textbf{$\mathbb{E}_{t < \gamma}[\log \hat{L}(\epsilon)]$} & 
        \textbf{$\mathbb{E}_{t \geq \gamma}[\log \hat{L}(\epsilon)]$} \\
        \hline
        Complete & CUSUM Mix & $-D_{KL}(f_{\phi}\| g_{\theta})$  
        & $D_{KL}(g_\theta\| f_{\phi})$\\
        \hline
        Partial Post & CUSUM Sinmix
         & $- D_{KL}(f_{\phi}\| \hat{g}_{\theta})$  
         & $D_{KL}(g_{\theta}(\epsilon)\| f_{\phi}(\epsilon)) - D_{KL}(g_{\theta}(\epsilon)\| \hat{g}_{\theta}(\epsilon))$\\
        \hline
        Partial Pre \& Post & CUSUM Single
        & $- D_{KL}(f_{\phi}\| \hat{g}_{\theta}) + D_{KL}(f_{\phi}\|\hat{f}_{\phi})$ & 
        $D_{KL}(g_{\theta}\| \hat{f_{\phi}}) - D_{KL}(g_{\theta}\|\hat{g}_{\theta})$\\
        \hline
    \end{tabular}
    \label{tab:MTFA_distribution_math_algo}
    \begin{minipage}{\textwidth}
\end{minipage}

\end{table}

\noindent
{\bf Expectation. }
The numerical evaluation of CUSUM-based change detection under varying distributional knowledge levels (Table~\ref{tab:tab8}) demonstrates alignment with theoretical requirements: pre-change expectations \(\mathbb{E}_{t < \gamma}[\log \hat{L}(\epsilon)]\) are strictly negative across all algorithms (\eg, CUSUM Mix: \(-1.90\) ADE for FQA), satisfying the false alarm suppression criterion, while post-change expectations \(\mathbb{E}_{t \geq \gamma}[\log \hat{L}(\epsilon)]\) remain positive (e.g., CUSUM Mix: \(1.98\) ADE for GRIP++), ensuring detection reliability. The results validate the theoretical expressions derived from KL divergences—complete knowledge (CUSUM Mix) achieves maximal post-change detectability (\(D_{KL}(g_\theta \| f_\phi)\)) but incurs higher pre-change penalties (\(-D_{KL}(f_\phi \| {g}_\theta)\)), whereas partial knowledge methods (CUSUM Sinmix/Single) exhibit lower expectation values (\eg, CUSUM Sinmix post-change expectation $< 1.09$ and CUSUM Single post-change expectation $<$ \(0.71\)).

\begin{table}[ht]
\caption{  {Table 8.} Numerical evaluation of $\mathbb{E}_{t < \gamma}[\log \hat{L}(\epsilon)]$  and $\mathbb{E}_{t \geq \gamma}[\log \hat{L}(\epsilon)]$ across GRIP++ and FQA model and three metrics (ApolloScape dataset). The outstanding result is emphasized in bold.}

\vspace{-0.2cm}
    \centering
    \begin{minipage}{0.48\textwidth}
        \centering
        \begin{tabular}{lccc|ccc}
        \toprule
        \multirow{2}{*}{\textbf{Algo}} & \multicolumn{3}{c|}{\textbf{GRIP++}} & \multicolumn{3}{c}{\textbf{FQA}} \\
        \cmidrule(lr){2-4} \cmidrule(lr){5-7}
         & ADE & FDE & RMSE & ADE & FDE & RMSE \\
        \midrule
        $\star$  Mix      & -1.69 & -1.77 & -1.73 & {\bf -1.90} & -1.28 & -2.14 \\
        $\star$  Sinmix   & -1.08 & {\bf -1.39} & -0.98 & -0.99 & -0.84 & -0.94 \\
        $\star$  Single   & -0.72 & {\bf -1.12} & -0.70 & -0.63 & -0.77 & -1.02 \\
        \bottomrule
    \end{tabular}
        \caption{(a) Expected value of pre-change ($t < \gamma$)}
        \label{tab:tab9}
    \end{minipage}
    \hfill
    \begin{minipage}{0.48\textwidth}
        \centering
        \begin{tabular}{lccc|ccc}
        \toprule
        \multirow{2}{*}{\textbf{Algo}} & \multicolumn{3}{c|}{\textbf{GRIP++}} & \multicolumn{3}{c}{\textbf{FQA}} \\
        \cmidrule(lr){2-4} \cmidrule(lr){5-7}
         & ADE & FDE & RMSE & ADE & FDE & RMSE \\
        \midrule
        $\star$  Mix      & {\bf 1.98} & 1.44 & 1.75 & 1.01 & 1.17 & 1.75 \\
        $\star$  Sinmix   & 0.95 & 0.85 & 0.98 & 0.85 & {\bf 1.09} & 0.92 \\
        $\star$  Single  & 0.54 & 0.62 & 0.70 & 0.51 & 0.66 & {\bf 0.71} \\
        \bottomrule
    \end{tabular}
        \caption{(b) Expected value of post-change (${t \geq \gamma}$)}
    \end{minipage}
      Represent CUSUM using the symbol {$^\star$} (due to space constraints).
      \label{tab:tab8}
% \vspace{-0.3cm}
\end{table}

\begin{table}[ht]
\caption{ {Table 9.}  Variance of of $\mathbb{E}_{t < \gamma}[\log \hat{L}(\epsilon)]$  and $\mathbb{E}_{t \geq \gamma}[\log \hat{L}(\epsilon)]$ across GRIP++ and FQA model and three metrics (ApolloScape dataset). The outstanding result is emphasized in bold.} 
\vspace{-0.2cm}
    \centering
    \begin{minipage}{0.48\textwidth}
        \centering
        \begin{tabular}{lccc|ccc}
        \toprule
        \multirow{2}{*}{\textbf{Algo}} & \multicolumn{3}{c|}{\textbf{GRIP++}} & \multicolumn{3}{c}{\textbf{FQA}} \\
        \cmidrule(lr){2-4} \cmidrule(lr){5-7}
         & ADE & FDE & RMSE & ADE & FDE & RMSE \\
        \midrule
        $^\star$ Mix    & {\bf 0.40}   & 0.42   & 0.41   & 0.43   & 0.42   & 0.41   \\
        $^\star$ Sinmix & 0.47   & {\bf 0.45}   & 0.46   & 0.53   & 0.52   & 0.54   \\
        $^\star$ Single & 0.05   & {\bf 0.04}   & 0.06   & 0.06   & 0.05   & 0.07   \\
        \bottomrule
    \end{tabular}
        \caption{(a) {Pre-change Variance} (\(\sigma^2\)) (${t < \gamma}$)}
    \end{minipage}
    \hfill
    \begin{minipage}{0.48\textwidth}
        \centering
        \begin{tabular}{lccc|ccc}
        \toprule
        \multirow{2}{*}{\textbf{Al
        go}} & \multicolumn{3}{c|}{\textbf{GRIP++}} & \multicolumn{3}{c}{\textbf{FQA}} \\
        \cmidrule(lr){2-4} \cmidrule(lr){5-7}
         & ADE & FDE & RMSE & ADE & FDE & RMSE \\
        \midrule
        {$^\star$} Mix  & 0.38 & 0.39 & 0.37 & {\bf 0.36} & 0.40 & 0.36 \\
        {$^\star$} Sinmix  & 0.88 & 0.88 & 0.81 & 0.76 & {\bf 0.75} & 0.76 \\
        {$^\star$} Single  & 0.06 & {\bf 0.04} & {\bf 0.04} & 0.05 & 0.05 & 0.06 \\
        \bottomrule
    \end{tabular}
        \caption{(b) {Post-change Variance} (\(\sigma^2\)) (${t \geq \gamma}$)}
    \end{minipage}
\vspace{-0.3cm}
\end{table}

\vskip \baselineskip  

\noindent
{\bf Variance and Stability. }
The variance analysis (Table~\ref{tab:tab9}) reveals distinct stability patterns across groups. CUSUM Mix/Sinmix shows moderate to high variances: CUSUM Mix maintains stable pre-change variances (\( \sigma^2 \approx 0.40 \text{--} 0.43 \)), slightly decreasing post-change (\( \sigma^2 \approx 0.36 \text{--} 0.40 \)), while CUSUM Sinmix exhibits higher pre-change (\( \sigma^2 \approx 0.45 \text{--} 0.54 \)) and sharply rising post-change variances (\( \sigma^2 \approx 0.75 \text{--} 0.88 \)), reflecting approximation challenges, which may also stem from inherent randomness. CUSUM Single demonstrates exceptional stability, with pre-change variances \( \sigma^2 \leq 0.07 \) and post-change \( \sigma^2 \leq 0.06 \), validating its robustness under the same distributional assumption (\ie, single Gaussian). These trends highlight the trade-off between knowledge granularity and stability, offering valuable insights into maintaining practical reliability even in the face of increased approximation uncertainty.

\subsection{CUSUM Change-point Detection}
{\bf CUSUM Detection of ID and OOD Scene. }
Fig. \ref{fig:fig7} illustrates the effectiveness of the CUSUM algorithm in detecting abnormal shifts in AV's trajectory prediction by tracking changes in prediction errors. The detection relies on monitoring the cumulative statistic \( W_t \), which sums the log-likelihood ratios between pre-change and post-change distributions. A change is declared when \( W_t \) exceeds a predefined threshold \( b \). In the OOD scene (Fig. \ref{fig:subfig7a}), a perturbation is introduced at time step 490, and we observe that \( W_t \) rapidly increases, crossing the threshold \( b = 7 \) at time step 504. This triggers a detection with a delay of only 14 samples. In contrast, in the ID scene(Fig. \ref{fig:subfig7b}), prediction errors remain stable, and \( W_t \) never exceeds the threshold $b$. Consequently, no change point is detected, confirming that the vehicle's trajectory aligns with typical patterns and that no false alarm is raised.

\begin{figure}[t!] %\vspace{-0.1cm}
\captionsetup{labelformat=empty}
    \centering
    \subfloat[OOD Scene]{\includegraphics[width=.32\linewidth]{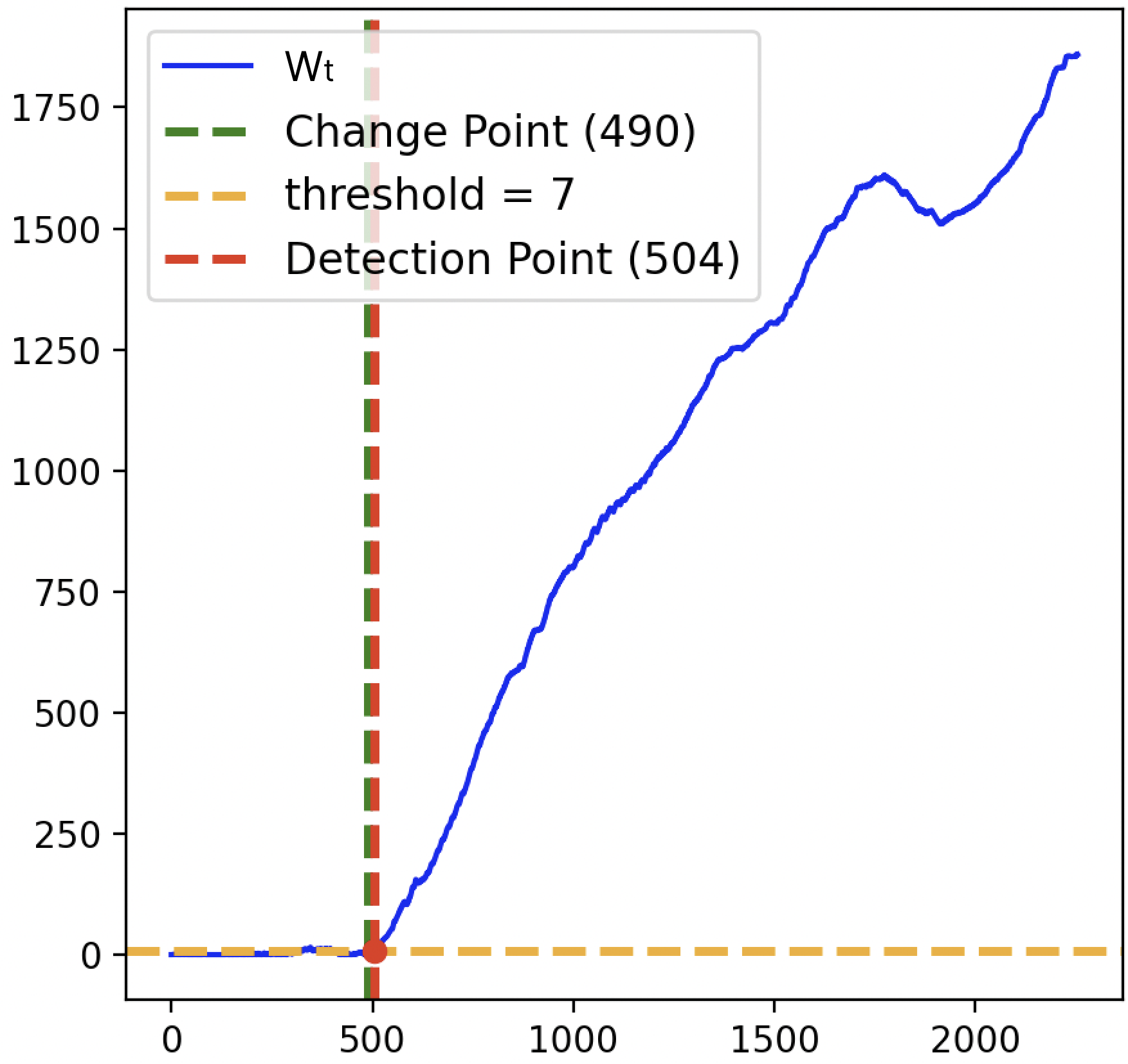}     \label{fig:subfig7a}}
    \subfloat[ID Scene]{\includegraphics[width=.3\linewidth]{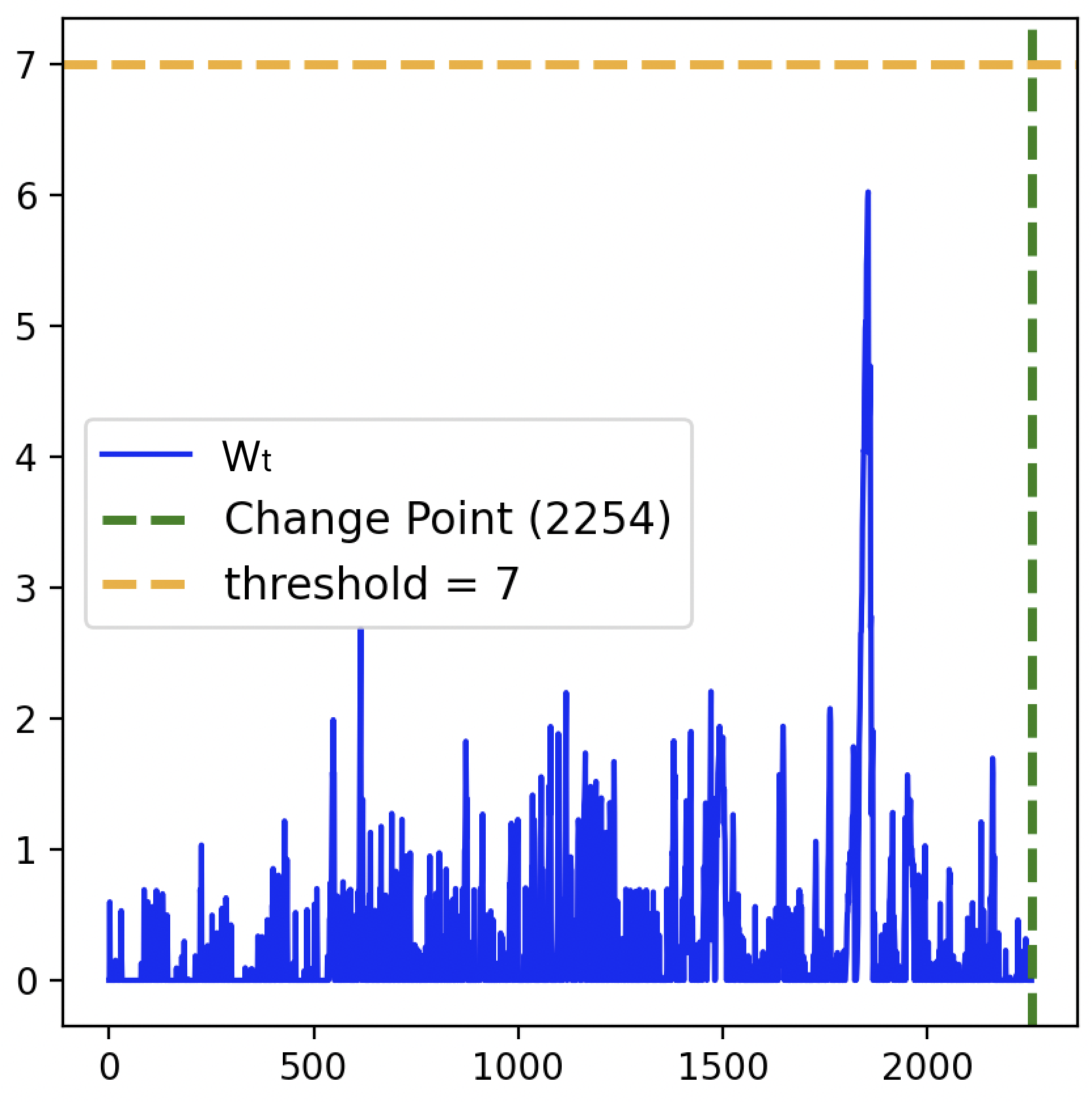}     \label{fig:subfig7b}}
\caption{  {Fig. 8. } Statistical evolution of CUSUM detection given the prediction errors from both ID and OOD scenes.
% A perturbation is introduced at time step $\gamma = 490$ (\ie, the change point occurs at step 490).
A change occurs at step 490, i.e., $\gamma = 490$.
The threshold $b$ is chosen to be 7.
At time step 504, the statistic $W_t$ surpasses the threshold ($b = 7$), declaring the detection of a change. %signaling an OOD detection.
The detection delay is 14 time steps. This illustrating experiment is conducted using the GRIP++ model on the ApolloScape dataset, with prediction errors evaluated using the ADE metric.}
\Description{}
\label{fig:fig7}
% \vspace{-0.3cm}
\end{figure}

\begin{figure*}[ht]
\centering
\captionsetup{labelformat=empty}

% \subfloat[Avg Detection Delay
{\includegraphics[width=0.8\linewidth]{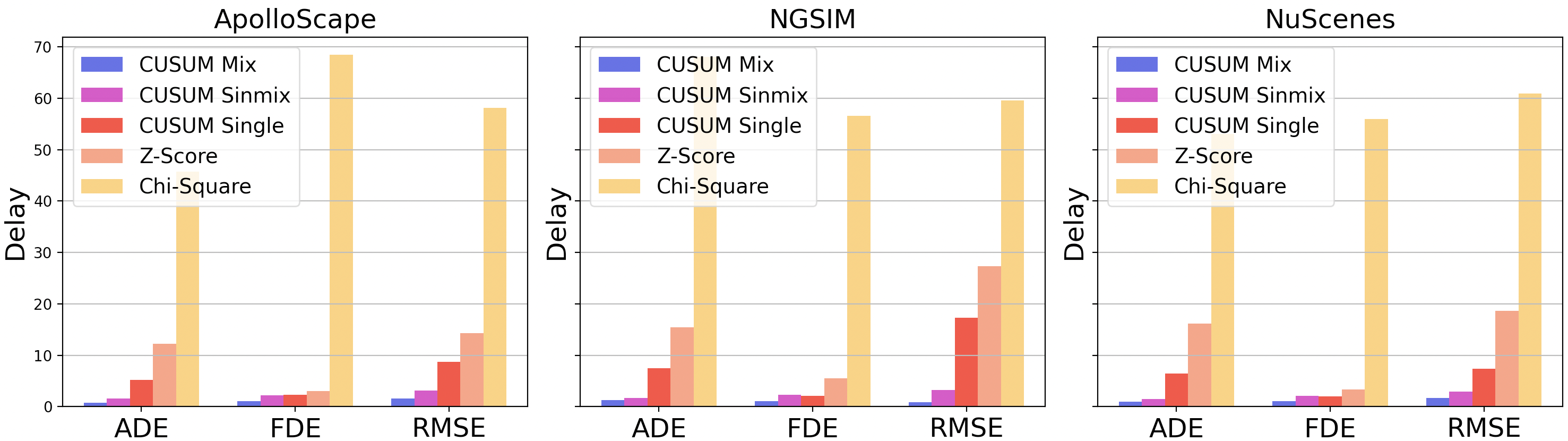}     \label{fig: subfig8a}}
% \subfloat[Delay v.s. MTFA]{\includegraphics[width=.28\linewidth]{felicia_arXiv/acmfig/fig13a.png}     \label{fig: subfig8b}}
    
\caption{  {Fig. 9. } WADD of the tested OOD detection algorithms across different metrics under the same MTFA. The delay is measured using \textbf{ADE}, \textbf{FDE}, and \textbf{RMSE}, providing a comparative analysis of algorithm performance in identifying anomalies. Lower delay values indicate faster detection. GRIP++ Model is used for illustration.}
\Description{}
\label{fig:fig8}
\vskip -1.3ex
\end{figure*}

\vskip \baselineskip  

\noindent
{\bf Worst-case Average Detection Delay. }
To evaluate detection delay across algorithms, all methods were standardized to identical MTFA conditions, as illustrated in Fig. \ref{fig:fig8}. Detection thresholds were systematically calibrated for each algorithm to achieve parity in MTFA performance. For rigorous statistical validation, experiments were conducted with a predefined change point at time step 0 to measure worst-case average detection delay (WADD), with 10,000 independent trials performed per method. Results revealed significant performance disparities: CUSUM Mix exhibited the shortest detection delay at 3 samples, outperforming CUSUM Sinmix (5 samples) and CUSUM Single (8 samples). In contrast, the Z-Score method required 15 samples for detection, while the baseline Chi-Square method lagged substantially at 50 samples on the ApolloScape dataset using the ADE metric. This comparative analysis underscores the superior responsiveness of CUSUM-based approaches in time-sensitive change detection scenarios, highlighting their practical advantage in applications demanding rapid anomaly identification with controlled false-alarm rates.  
\vskip \baselineskip

\noindent
{\bf False Alarm-Delay Trade-off Analysis. }
We further evaluate detection reliability across varying MTFA constraints. Fig. \ref{fig:fig9} presents a comparative evaluation of Algorithm \ref{alg: cusum_detection} applied to benchmark models GRIP++ and FQA across three datasets and three performance metrics. Across all experiments, CUSUM Mix demonstrates the most favorable trade-off, achieving the lowest detection delay while maintaining a comparable MTFA to other algorithms. CUSUM Sinmix and CUSUM Single follow, though their performance declines due to limited distributional knowledge. Nevertheless, 
cross-dataset validation (thumbnail in Fig. \ref{fig:fig9}) confirms the robustness of CUSUM-based methods. Their predictable delay scaling under dynamic thresholds—unlike the erratic behavior of Z-Score and Chi-Square—positions them as preferred solutions for real-time systems requiring simultaneous precision and responsiveness. This stability is particularly critical in applications where false-alarm constraints evolve dynamically, such as autonomous navigation or industrial monitoring.

% \vskip \baselineskip  

% \noindent
% {\bf Performance on Different MTFA. }
% In the low-MTFA region (\eg, MTFA $<$ 3), where false alarm tolerance is low, all CUSUM algorithms show small detection delays (\eg, 15 to 18 time-frames), demonstrating their ability to detect changes quickly under strict constraints. This behavior is expected, as lower thresholds (\eg, \( b < 1 \)) enable more aggressive detections, reducing delays at the cost of higher false alarm rates (\eg, a 10\% increase). As the threshold increases, stricter false alarm control introduces a trade-off, leading to increased detection delays. However, the rate of this trade-off varies significantly across algorithms. Chi-Square and Z-Score exhibit an accelerating increase in detection delay (\eg, a 200\% growth rate as \( b \) increases), indicating that their detection mechanisms slow considerably under stringent constraints. This limits their effectiveness in applications requiring both low delays and high reliability. In contrast, CUSUM Mix and CUSUM Sinmix show a more controlled and stable increase in detection delay (\eg, a 25\% growth rate as \( b \) increases), highlighting their robustness. Their delays increase at a steadier and more predictable rate, even under stricter thresholds, making them better suited for real-world applications where speed and accuracy are critical. Fig. \ref{fig:fig7} further validates these trends across a wide range of dataset-model combinations.

\begin{figure*}[t!] 
%\vspace{0.3cm}
    \centering
    \captionsetup{labelformat=empty}
        \includegraphics[width=\linewidth]{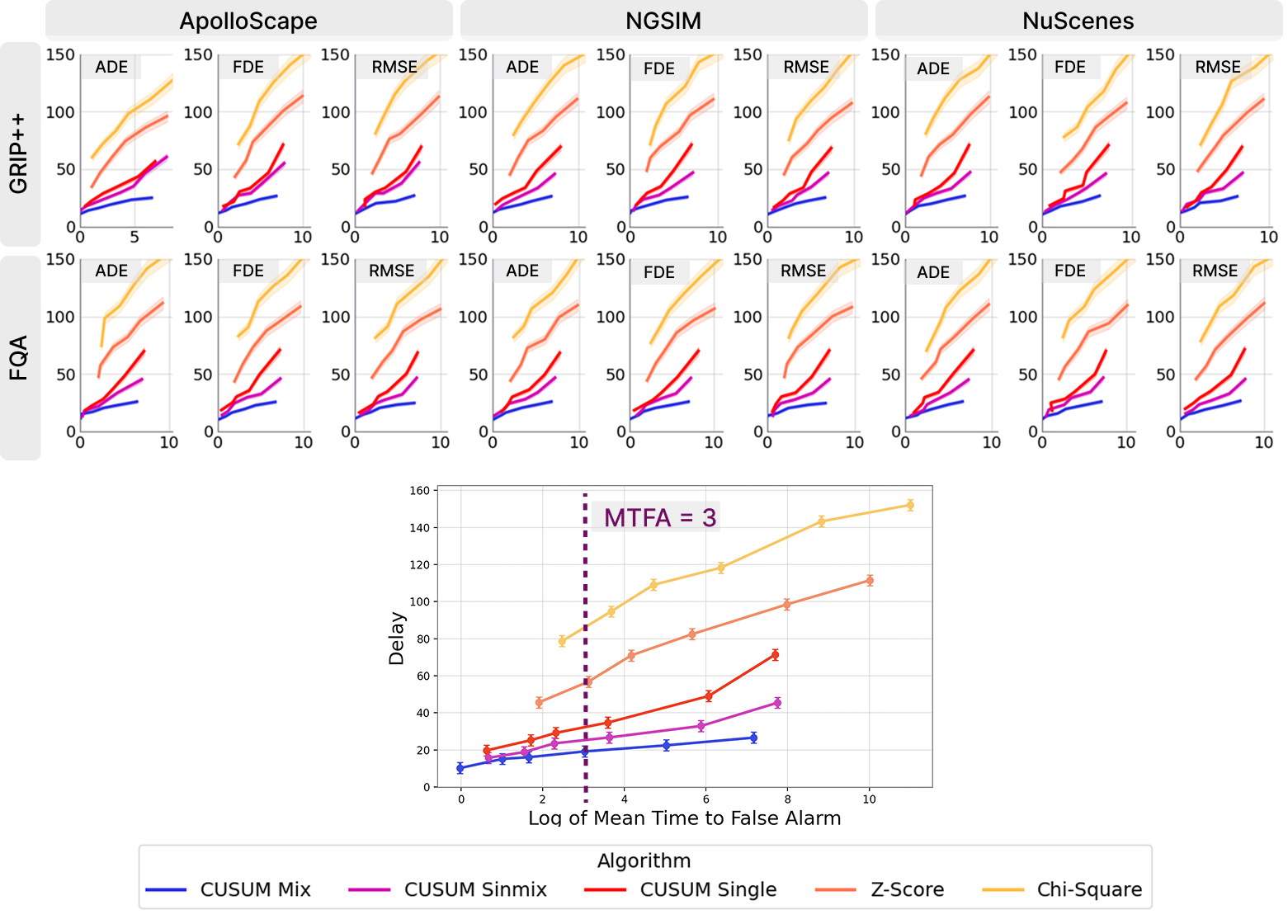}
    \caption{  {Fig. 10. } Performance of Delay over MTFA, shown for both thumbnail (overview) and zoomed-in image (detailed view). {\bf Thumbnail (overview)} compares GRIP++ (top row) and FQA (bottom row). Each row is organized in groups of three: the first three columns correspond to the ApolloScape datasets, followed by NGSIM and NuScenes. Within each group, the metrics are arranged as ADE, FDE, and RMSE, highlighting trends and ensuring consistent performance across various scenarios. {\bf Single image (detailed view)} presents an example using the GRIP++ model and the ADE metric, with the threshold \( b \) increasing from left to right.}
    \Description{}
    \label{fig:fig9}
\end{figure*}
\section{CONCLUSION}  
We propose lightweight Quick Change Detection (QCD) methods for detecting out-of-distribution (OOD) scenes in real-world trajectory prediction datasets, introducing a novel approach in this field. Our method monitors a scalar variable of prediction errors, enabling OOD detection at any point during inference. Unlike prior studies that rely heavily on simulated environments, we rigorously address this challenge using three real-world datasets—ApolloScape, NGSIM, and NuScenes—alongside two state-of-the-art prediction models, GRIP++ and FQA. Our work demonstrates the feasibility and effectiveness of applying CUSUM-based algorithms for timely and accurate OOD detection in AV systems. Experimental results show that CUSUM Mix consistently achieves superior detection performance with minimal false alarms, particularly when modeled using GMMs across all dataset-model combinations. These findings provide a robust solution for enhancing the safety and reliability of AV systems through timely model adjustments in dynamic environments. 

% For future work, we suggest exploring a tiered alarm system with context-aware alerts to prioritize critical warnings while reducing computational overhead, as well as leveraging imitation learning to adapt the system to OOD scenes.
For future work, first, we plan to explore the development of a tiered alarm system with multiple thresholds, enabling context-aware alerts that prioritize critical warnings while filtering out benign anomalies. Such an approach could reduce computational overhead and enhance operational efficiency. Moreover, we want to further propose the adaptation solutions after the OOD scene is detected. We will explore imitation-based techniques to refine the system’s ability to mimic expert decision-making in OOD scenes. By leveraging imitation learning, the system can learn from human or expert interventions during rare or critical events, improving its ability to generalize to unseen situations. This will involve training the model on annotated datasets where expert responses to OOD events are recorded, enabling the system to better predict and respond to similar anomalies in real-time.

\renewcommand{\refname}{\MakeUppercase{References}}  % For article class
\renewcommand{\bibname}{\MakeUppercase{References}}  % For book or report class

\bibliographystyle{ACM-Reference-Format}
\bibliography{references}

\end{document}